\newcommand{\version}{arxiv}
\begin{document}
\title{Recovering Graph-Structured Activations using Adaptive Compressive Measurements}

\author[1]{
Akshay Krishnamurthy
\thanks{akshaykr@cs.cmu.edu}}
\author[2,3]{
James Sharpnack
\thanks{jsharpna@cs.cmu.edu}}
\author[2]{
Aarti Singh
\thanks{aarti@cs.cmu.edu}}

\affil[1]{Computer Science Department\\
Carnegie Mellon University}
\affil[2]{Machine Learning Department\\
Carnegie Mellon University}
\affil[3]{Statistics Department\\
Carnegie Mellon University}

\maketitle

\begin{abstract}
We study the localization of a cluster of activated vertices in a graph, from
adaptively designed compressive measurements. We propose a hierarchical
partitioning of the graph that groups the activated vertices into few
partitions, so that a top-down sensing procedure can identify these partitions,
and hence the activations, using few measurements. By exploiting the cluster
structure, we are able to provide localization guarantees at weaker signal to
noise ratios than in the unstructured setting. We complement this performance
guarantee with an information-theoretic lower bound, providing a necessary
signal-to-noise ratio for any algorithm to successfully localize the cluster. We
verify our analysis with some simulations, demonstrating the practicality of our
algorithm.
\end{abstract}

\section{Introduction}
We are interested in recovering the support of a sparse vector $\xb \in
\mathbb{R}^n$ observed through the noisy linear model:
\[
y_i = a_i^T\xb + \epsilon_i
\]
Where $\epsilon_i \sim \mathcal{N}(0, \sigma^2)$ and $\sum_i ||a_i||^2 \le m$.  
This support recovery problem is well-known and fundamental to the theory of compressive sensing, which involves estimating a high-dimensional signal vector from few linear measurements~\cite{CandesCompressed}.
Indeed if $\xb$ is a $k$-sparse vector whose non-zero components are $\ge \mu$, it is now well known that one cannot identify these components if $\frac{\mu}{\sigma} = o(\sqrt{\frac{n}{m}\log (n/k)})$ and one can if $\frac{\mu}{\sigma} = \omega(\sqrt{\frac{n}{m}\log n})$, provided that $m \ge k \log n$~\cite{wainwright2009information}.
Indeed if $\xb$ is a $k$-sparse vector whose non-zero components are $\ge \mu$, it is now well known that one can identify these components if and only if $\frac{\mu}{\sigma} = o(\sqrt{\frac{n}{m}\log (n)})$ and $m \ge k \log(n/k)$~\cite{aeron2010information,wainwright2009information}.

We build upon the classical results of compressive sensing by developing
procedures that are \emph{adaptive} and that exploit additional \emph{structure}
in the underlying signal. Adaptivity allows the procedure to focus measurements
on activated components of the signal while structure can dramatically reduce
the combinatorial search space of the problem. Combined, both ideas can lead to significant
performance improvements over classical compressed sensing. This paper explores
the role of adaptivity and structure in a very general support recovery problem.

Active learning and adaptivity are not new ideas to the signal processing community and a number of papers in recent years have characterized the advantages and limits of adaptive sensing over passive approaches.
One of the first ideas in this direction was \emph{distilled sensing}~\cite{haupt2011distilled}, which uses direct rather than compressive measurements.
Inspired by that work, a number of authors have studied adaptivity in compressive sensing and shown similar performance gains~\cite{DavenportCastro,HauptNowak,MalloyNowak}.
These approaches do not incorporate any notion of structure.


The introduction of structure to the compressed sensing framework has also been explored by a number of authors~\cite{SoniHaupt,BaraniukCevher,balakrishnan2012recovering}.
Broadly speaking, these structural assumptions restrict the signal to a few of the ${n \choose k}$ linear subspaces that contain $k$-sparse signals.
With this restrictions, one can often design sensing procedures that focus on these allowed subspaces and enjoy significant performance improvements over unstructured problems.
We remark that both Soni and Haupt and Balakrishnan \emph{et. al.} develop adaptive sensing procedures for structured problems, but under a more restrictive setting than this study~\cite{SoniHaupt,balakrishnan2012recovering}.

This paper continues in both of these directions exploring the role of adaptivity {\em and} structure in recovering activated clusters in graphs.
We consider localizing activated clusters of nodes whose boundary in the graph is smaller than some parameter $\rho$.
This notion of structure is more general than previous studies, yet we are still able to demonstrate performance improvements over unstructured problems.

Our study of cluster identification is motivated by a number of applications in sensor networks measurement and monitoring, including identification of viruses in human or computer networks or contamination in a body of water.
In these settings, we expect the signal of interest to be localized, or clustered, in the underlying network and want to develop efficient procedures that exploit this cluster structure.

In this paper, we propose two related adaptive sensing procedures for identifying a cluster of activations in a network.
We give a sufficient condition on the signal-to-noise ration (SNR) under which the first procedure exactly identifies the cluster.
While this SNR is only slightly weaker than the SNR that is sufficient for unstructured problems, we show via information-theoretic arguments that one cannot hope for significantly better performance. 

For the second procedure, we perform a more refined analysis and show that the required SNR depends on how our algorithmic tool captures the cluster structure.
In some cases this can lead to consistent recovery at much weaker SNR.
The second procedure can also be adapted to recover a large fraction of the cluster.
We also explore the performance of our procedures via an empirical study.
Our results demonstrate the gains from exploiting both structure and adaptivity in support recovery problems. 
\ifthenelse{\equal{\version}{arxiv}}{}{Due to space restrictions, all proofs are available in an extended version of the paper~\cite{krishnamurthy2013arxiv}.}

\begin{table}
\centering
\begin{tabular}{|c | c | c|}
\hline
Setting & Necessary & Sufficient\\
\hline \hline
Passive, unstructured & $ \sqrt{\frac{n}{m} \log n}$~\cite{aeron2010information} & $
\sqrt{\frac{n}{m} \log n}$~\cite{aeron2010information}\\
Adaptive, unstructured & $ \sqrt{\frac{n}{m}}$~\cite{arias2011fundamental} & $
\sqrt{\frac{n}{m}\log k}$~\cite{HauptNowak}\\
Adaptive, structured & $ \sqrt{\frac{n}{m}}$ (Thm.~\ref{thm:lower})& $
\sqrt{\frac{n}{m} \log (\rho \log n)}$ (Prop.~\ref{prop:exact})\\
\hline
\end{tabular}
\caption{Compressed Sensing landscape.}
\label{tab:results}
\ifthenelse{\equal{\version}{arxiv}}{}{\vspace{-0.5cm}}
\end{table}

\begin{table}
\centering
\begin{tabular}{|c| c | c | c|}
\hline
Graph & Structure & Necessary & Sufficient\\
\hline \hline
2-d Lattice & Rectangle & $ \frac{1}{k} \sqrt{\frac{n}{m}}$~\cite{balakrishnan2012recovering} &
$\frac{1}{k} \sqrt{\frac{n}{m}}$~\cite{balakrishnan2012recovering}\\
Rooted Tree & Rooted subtree  & $\sqrt{\frac{k}{m}}$~\cite{soni2013fundamental} & $ \sqrt{\frac{k}{m} \log k}$~\cite{SoniHaupt}\\
Arbitrary & Best case & & $ \frac{1}{k} \sqrt{\frac{n}{m} \log((\rho +k) \log n)}$\\
\hline
\end{tabular}
\caption{Adaptive, Structured, Compressed Sensing.}
\label{tab:adaptive}
\ifthenelse{\equal{\version}{arxiv}}{}{\vspace{-0.5cm}}
\end{table}

We put our results in context of the compressed sensing landscape in Tables~\ref{tab:results} and~\ref{tab:adaptive}.
Here $k$ is the cluster size and, in the structured setting, $\rho$ denotes the number of edges leaving the cluster.
In the unstructured setting, Wainwright, and later Aeron \emph{et al.}, studied the passive support recovery problem while Haupt and Nowak consider the adaptive case~\cite{wainwright2009information,aeron2010information,HauptNowak}.
These works analyze algorithms with near-optimal performance guarantees.
Our work provides both upper and lower bounds for the adaptive structured setting.
Focusing on different notions of structure, Balakrishnan \emph{et. al.} give necessary and sufficient conditions for recovering a small square of activiations in a grid~\cite{balakrishnan2012recovering} while Soni and Haupt analyze the recovery of tree-sparse signals~\cite{SoniHaupt, soni2013fundamental}. 
Our work provides guarantees that depends on how well the signal is captured by our algorithmic construction.
In the worst case, we guarantee exact recover with an SNR of $\sqrt{\frac{n}{m} \log(\rho \log n)}$ (Proposition~\ref{prop:exact}) and in the best case, we can tolerate an SNR of $\frac{1}{k} \sqrt{\frac{n}{m} \log  ((\rho + k)\log n)}$ (Theorem~\ref{thm:approx}). 
It is worth mentioning that~\cite{SoniHaupt} obtains better results than ours, but study a very specific setting where the graph is a rooted tree and the signal is rooted subtree.

\section{Main Results}
Let $C^\star$ denote a set of activated vertices in a known graph $G=(V,E)$ on $n$ nodes with maximal degree $d$.
We observe $C^\star$ through noisy compressed measurements of the vector $\xb = \mu \mathbf{1}_{C^\star}$, that is we may select sensing vectors $a_i \in \mathbb{R}^n$ and observe $y_i = a_i^T\xb + \epsilon_i$ where $\epsilon_i \sim \mathcal{N}(0, \sigma^2)$ independently.
We require $\sum_i ||a_i||^2 \le m$ so that the total sensing energy, or budget, is at most $m$.
We allow for \emph{adaptivity}, meaning that the procedure may use the measurements $y_1, \ldots, y_{i-1}$ to inform the choice of the subsequent vector $a_i$.
We assume that the signal strength $\mu$ is known.
Our goal is to develop procedures that successfully recover $C^\star$ in a low signal-to-noise ratio regime.



We will require the set $C^\star$, which we will henceforth call a
\emph{cluster}, to have small cut-size in the graph $G$. Formally:
\[
C^\star \in \Ccal_{\rho} = \{C : |\{(u,v) : u \in C, v \notin C\}| \le \rho\}
\]

Our algorithmic tool for identification of $C^\star$ is a \textbf{dendrogram} $\Dcal$, a hierarchical partitioning of $G$.

\begin{definition}
A \textbf{dendrogram} $\Dcal$ is a tree of blocks $\{D\}$ where each block is a connected set of vertices in $G$ and:
\begin{enumerate}
\item The root of $\Dcal$ is $V$, the set of all vertices, and the leaves of the dendrogram are all of the singletons $\{v\}$, $v \in G$.
The sets corresponding to the children of a block $D$ form a partition of the elements in $D$ while preserving graph connectivity in each cluster.
\item $\Dcal$ has degree at most $d$, the maximum degree in $G$.
\item $\Dcal$ is approximately balanced.
  Specifically the child of any block $D$ has size at most $|D|/2$.
\item The height $L$ of $\Dcal$ is at most $\log_2(n)$.
\end{enumerate}
\label{ass:dendrogram}
\end{definition}

See Figure~\ref{fig:dendrogram}.
We will see one way to construct such dendrograms in Section~\ref{sec:dendrograms}.
Note that the results of Sharpnack \emph{et. al.} imply that one can construct a suitable dendrogram for any graph~\cite{sharpnack2013}. 
By the fact that each block of $\Dcal$ is a connected set of vertices, we
immediately have the following proposition:
\begin{proposition}
A block $D$ is \textbf{impure} if $0 < |D \cap C^\star| < |D|$.
For any $C^\star$ in $\Ccal_{\rho}$ at most $\rho$ blocks are impure at any level in $\Dcal$. 
\label{prop:dend}
\end{proposition}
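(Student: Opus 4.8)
The plan is to set up an injective charging from impure blocks to boundary edges of $C^\star$. Recall that $C^\star \in \Ccal_{\rho}$ means the boundary set $\partial C^\star = \{(u,v) \in E : u \in C^\star, v \notin C^\star\}$ has at most $\rho$ edges. I would fix an arbitrary level of $\Dcal$ and assign to each impure block at that level a distinct boundary edge; the count of impure blocks is then bounded by $|\partial C^\star| \le \rho$.

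The key step is a connectivity observation: every impure block $D$ contains at least one boundary edge with \emph{both} endpoints in $D$. Since $D$ is impure we may pick $u \in D \cap C^\star$ and $w \in D \setminus C^\star$, and since Definition~\ref{ass:dendrogram} guarantees that each block is a connected subgraph of $G$, there is a path from $u$ to $w$ lying entirely within $D$. Traversing this path from $u$ (inside $C^\star$) to $w$ (outside $C^\star$), some consecutive pair of vertices $(a,b)$ must satisfy $a \in C^\star$ and $b \notin C^\star$; the edge $(a,b)$ is then a boundary edge of $C^\star$ contained in $D$. I would record, for each impure block, one such internal boundary edge.

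It then remains to check that distinct impure blocks receive distinct boundary edges. Here I would invoke the fact that the blocks at a single level of a dendrogram are pairwise disjoint: any two blocks of $\Dcal$ are either nested or disjoint (a laminar family, by the partition property in Definition~\ref{ass:dendrogram}), and two blocks at the same level cannot be nested. Because each recorded edge has both endpoints inside its block, and the blocks are disjoint, no edge can be recorded for two different blocks, so the charging map is injective into $\partial C^\star$ and the bound $\rho$ follows.

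I expect no substantive obstacle beyond bookkeeping; the path-traversal argument and the disjointness of a level are both routine. The only point that genuinely needs care is making precise what is meant by ``a level,'' so that the blocks involved are indeed pairwise disjoint — but this is exactly the laminar/partition structure built into the definition of a dendrogram, so it requires only a careful statement rather than real work.
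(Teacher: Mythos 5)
Your proof is correct and is essentially the argument the paper has in mind: the paper states the proposition as an immediate consequence of each block being a connected set of vertices, and your charging of each impure block to an internal boundary edge (via a path from an active to an inactive vertex inside the block), combined with disjointness of same-level blocks, is precisely the fleshed-out version of that remark.
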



\subsection{Universal Guarantees}
With a dendrogram $\Dcal$, we can sense with measurements of the form $\mathbf{1}_D$ for a parent block $D$ and recursively sense on the children blocks to identify the activated vertices.
This procedure has the same flavor as the compressive binary search procedure~\cite{DavenportCastro}.
Specifically, fix a threshold $\tau$ and energy parameter $\alpha$ and when sensing on block $D$ obtain the measurement
\begin{eqnarray}
y_D = \sqrt{\alpha} \mathbf{1}_{D}^T \xb + \epsilon_D
\end{eqnarray}
If $\tau < y_D < \mu\sqrt{\alpha}|D| -\tau$ continue sensing on $D$'s children, otherwise terminate the recursion.
At a fairly weak SNR and with appropriate setting for $\tau$ and $\alpha$, we can show that this
procedure will exactly identify $C^\star$:

\begin{figure*}
\noindent \begin{minipage}{0.5\textwidth}
\begin{center}
\includegraphics[scale=0.4]{./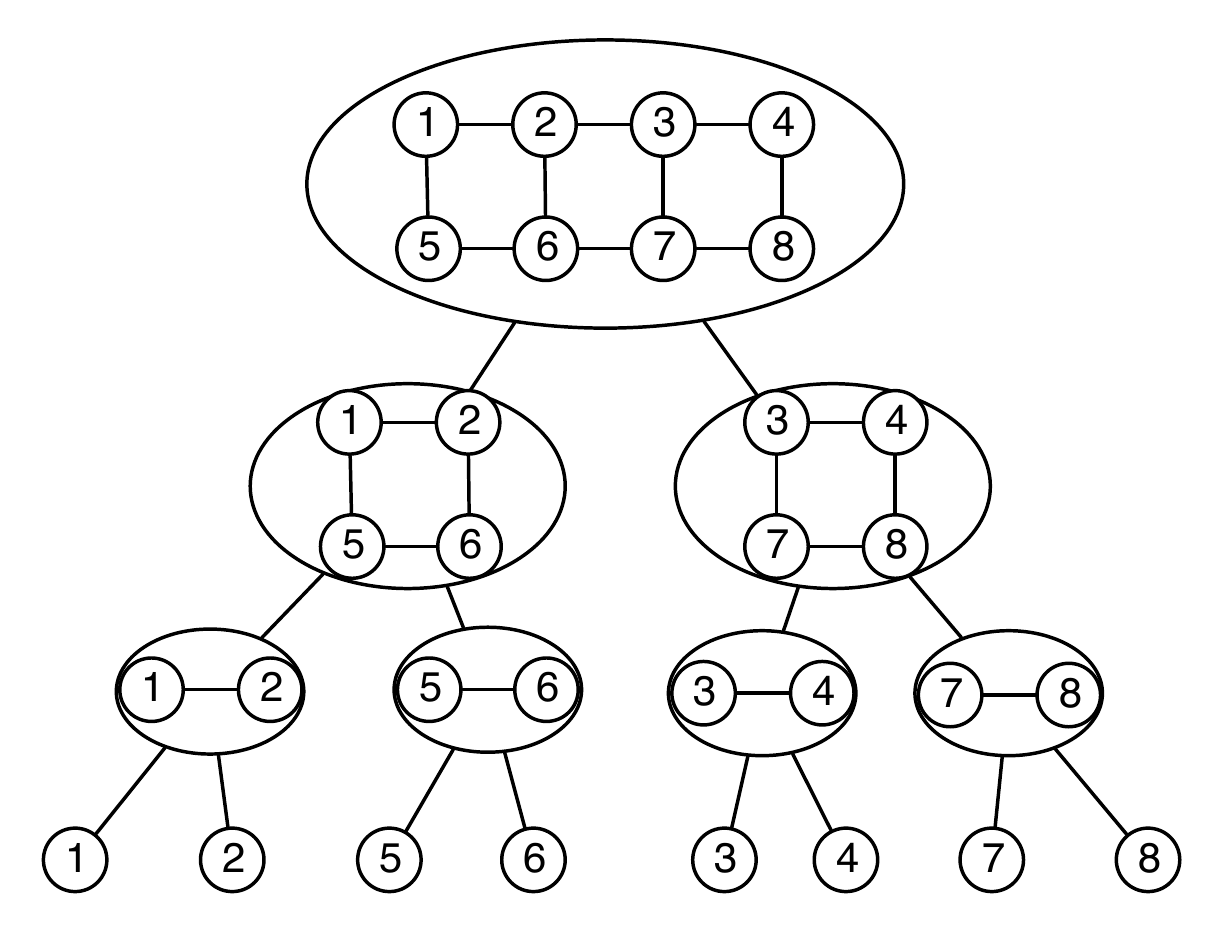}
\captionof{figure}{A dendrogram for the $2\times 4$ lattice graph.} \label{fig:dendrogram}
\end{center}
\end{minipage}%
\begin{minipage}{0.5\textwidth}
\begin{algorithm}[H]
\begin{algorithmic}
\REQUIRE Dendrogram $\Dcal$ and sensing budget $m$, failure probability $\delta$.
\STATE set $\alpha = \frac{m}{4n \log_2 \rho}$, $\tau = \sigma
\sqrt{2 \log((d \rho L+1)/\delta)}$.
\STATE (1) Let $D$ be the root of $\Dcal$.
\STATE (2) Obtain $y_D = \sqrt{\alpha} \mathbf{1}_D^T \xb + \epsilon_D$
\STATE (3) If $y_D \ge \mu \sqrt{\alpha}|D| - \tau$ add $D$ to the estimate
$\hat{C}$.
\STATE (4) If $\tau \le y_D \le \mu \sqrt{\alpha}|D| - \tau$ recurse on
(2)-(4) with $D$'s children.
\STATE Output $\hat{C}$. 
\end{algorithmic}
\caption{Exact Recovery}
\label{alg:exact}
\end{algorithm}
\end{minipage}
\end{figure*}

\begin{proposition}
Set $\tau = \sigma \sqrt{2 \log((d\rho L+1)/\delta)}$. If the SNR satisfies:
\begin{eqnarray}
\frac{\mu}{\sigma} \ge \sqrt{\frac{8}{\alpha} \log \left(\frac{d \rho
    L+1}{\delta}\right)}
\end{eqnarray}
then with probability $\ge 1-\delta$, Algorithm~\ref{alg:exact} recovers $C^\star$ and using a sensing budget of at most $3 n \alpha \log_2 (d \rho)$.
\label{prop:exact}
\end{proposition}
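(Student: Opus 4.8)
The plan is to condition on a single clean event on which every Gaussian noise variable encountered by the recursion is bounded in magnitude by $\tau$, to show that on this event the algorithm makes the correct decision at every block it visits and therefore outputs $\hat{C} = C^\star$, and then to bound the total sensing energy separately.

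First I would classify each block $D$ of $\Dcal$ by its overlap with $C^\star$: \emph{pure active} ($D \subseteq C^\star$, so $\mathbf{1}_D^T\xb = \mu|D|$), \emph{pure inactive} ($D \cap C^\star = \emptyset$, so $\mathbf{1}_D^T\xb = 0$), and \emph{impure} (with mean $\mathbf{1}_D^T\xb = \mu|D\cap C^\star| \in [\mu,\mu(|D|-1)]$). Writing $y_D = \sqrt{\alpha}\mu|D\cap C^\star| + \epsilon_D$, and using the SNR hypothesis in its equivalent form $\mu\sqrt{\alpha} \ge 2\tau$, I would verify on the event $\{|\epsilon_D|\le\tau\}$ that: a pure active block has $y_D \ge \mu\sqrt{\alpha}|D|-\tau$ and is added (not recursed on); a pure inactive block has $y_D \le \tau$ and is neither added nor recursed on; and an impure block has $\tau \le y_D \le \mu\sqrt{\alpha}|D|-\tau$ and so triggers recursion onto its children but is not added. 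The condition $\mu\sqrt{\alpha}\ge 2\tau$ is used precisely here, since each implication needs the noiseless mean to sit at distance at least $\mu\sqrt{\alpha}-\tau \ge \tau$ from the thresholds $\tau$ and $\mu\sqrt{\alpha}|D|-\tau$.

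The main subtlety is that the set of blocks the algorithm actually measures is itself random, so I cannot naively union-bound over it. To break this circularity I would fix, deterministically, the set $\mathcal{M}$ consisting of the root together with all children of impure blocks; purity depends only on $C^\star$ and $\Dcal$, so $\mathcal{M}$ is non-random, and by Proposition~\ref{prop:dend} each of the $L$ levels has at most $\rho$ impure blocks, each with at most $d$ children, whence $|\mathcal{M}| \le d\rho L + 1$. A Gaussian tail bound together with the choice $\tau = \sigma\sqrt{2\log((d\rho L+1)/\delta)}$ makes each deviation $\{|\epsilon_D|>\tau\}$ have probability of order $\delta/(d\rho L+1)$, so a union bound over the blocks of $\mathcal{M}$ shows the clean event $\mathcal{E} = \{|\epsilon_D|\le\tau\ \forall D\in\mathcal{M}\}$ holds with probability at least $1-\delta$. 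I would then argue by induction down $\Dcal$ that on $\mathcal{E}$ the algorithm only ever visits blocks in $\mathcal{M}$: it starts at the root, recurses onto the children of any impure block it reaches (which lie in $\mathcal{M}$), and halts at any pure block. Hence every measurement is taken on a block of $\mathcal{M}$ whose noise is controlled, and the per-block decisions above hold simultaneously.

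Given this, $\hat{C}=C^\star$ is bookkeeping. For $\hat{C}\subseteq C^\star$, every added block is pure active and the recursion halts upon adding, so the added blocks form an antichain of subsets of $C^\star$. For $C^\star\subseteq\hat{C}$, I would take any $v\in C^\star$, follow the root-to-$\{v\}$ path, and consider the first pure block $D$ on it; every strict ancestor of $D$ on this path is impure and therefore recurses, so $D$ is visited, and since $v\in D\cap C^\star$ the block $D$ is pure active and hence added. For the budget, the energy spent is $\sum_{D\in\mathcal{M}}\alpha|D|$; grouping by level, the measured blocks at level $\ell$ number at most $d\rho$ and each has size at most $n/2^{\ell}$ by the balance property, while blocks at a common level are disjoint and so have total size at most $n$. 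Taking the minimum of these bounds, $\min\{d\rho\,n2^{-\ell}, n\}$, and summing the geometric tail past the crossover $2^\ell\approx d\rho$ gives $\sum_{D\in\mathcal{M}}|D| = O(n\log_2(d\rho))$, hence a budget of at most $3n\alpha\log_2(d\rho)$. The hard part is the circularity flagged above: confining the union bound to only $d\rho L+1$ blocks, rather than all $\Theta(n)$ blocks of $\Dcal$, is exactly what lets $\tau$, and therefore the required SNR, scale with $\log(d\rho L)$ instead of $\log n$.
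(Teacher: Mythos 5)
Your proposal is correct and follows essentially the same route as the paper's proof: per-block Gaussian tail bounds with the threshold $\tau$ and the SNR condition $\mu\sqrt{\alpha}\ge 2\tau$ to guarantee correct empty/full/impure decisions, a union bound over the $d\rho L+1$ blocks arising from Proposition~\ref{prop:dend} (children of impure blocks plus the root), and the same level-by-level budget sum $\sum_{\ell}\alpha\min\{n,\, d\rho\, n 2^{-\ell}\} \le 3\alpha n \log_2(d\rho)$. Your explicit resolution of the circularity---fixing the deterministic set $\mathcal{M}$ first and then arguing by induction that on the clean event the algorithm never leaves $\mathcal{M}$---is a more careful write-up of a step the paper performs implicitly ("assuming we perform this test correctly, we only refine $D$ if it is impure"), but it is the same underlying argument, not a different one.
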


We must set $\alpha \le \frac{m}{3n \log_2(d\rho)}$ so we do not exceed our budget of $m$.
With the this setting, the SNR requirement is:
\[
\frac{\mu}{\sigma} \ge \sqrt{\frac{24 n}{m} \log_2 (d \rho) \log \left(\frac{d \rho
    L+1}{\delta}\right)}
\]

Algorithm~\ref{alg:exact} performs similarly to the adaptive procedures for unstructured support recovery.
For constant $\rho$, the SNR requirement is $\omega(\sqrt{\frac{n}{m} \log \log_2 n})$ which is on the same order as the compressive binary search procedure~\cite{DavenportCastro} for recovering $1$-sparse signals.
For $k$-sparse signals, the best results require SNR of $\sqrt{\frac{n \log k}{m}}$ which can be much worse than our guarantee when $k \ge \log n$ and $\rho$ is small~\cite{MalloyNowak,HauptNowak}.

Thus, the procedure does enjoy small benefit from exploiting structure, but the generality of our set up precludes more substantial performance gains.
Indeed, we are able to show that one cannot do much better than Algorithm~\ref{alg:exact}.
This information theoretic lower bound is a simple consequence of the results from Arias-Castro \emph{et. al.}~\cite{arias2011fundamental}.

\begin{theorem}
Fix any graph $G$ and suppose $\rho \ge d$. If:
\[
\frac{\mu}{\sigma} = o\left(\sqrt{\frac{n}{m}}\right)
\]
then $\inf_{\hat{C}} \sup_{C^\star \in \Ccal_\rho} \mathbb{P}[\hat{C} \ne C^\star] \rightarrow \frac{1}{2}$. Therefore no procedure can reliably estimate $C^\star \in \Ccal_\rho$.
\label{thm:lower}
\end{theorem}

The lower bound demonstrates one of the fundamental challenges in exploiting structure in the cluster recovery problem: since $\Ccal_\rho$ is not parameterized by cluster size, in the worst case, one should not hope for performance improvements that depend on cluster size or sparsity. 
More concretely, if $\rho \ge d$, the set $\Ccal_{\rho}$ contains all singleton vertices, reducing to a completely unstructured setting.
Here, the results of Davenport and Arias-Castro imply that to exactly recover a cluster of size one, it is necessary to have SNR of $\sqrt{\frac{n}{m}}$~\cite{DavenportCastro}.
Moreover, nothing in our setup prevents $G$ from being a complete graph on $n$ vertices, which also reduces to the unstructured setting.

The inherent difficulty of this problem is not only information-theoretic, but also computational.
The typical way to exploit structure is to scan across the possible signal patterns, using the fact that the search space is highly restricted.
Unfortunately, Karger proved that the number of cuts of size $\rho$ is $\Theta(n^\rho)$ \cite{Karger96}, meaning that $\Ccal_\rho$ is not very restrictive.
Even if we could efficiently scan all patterns in $\Ccal_\rho$, distinguishing between two clusters with high overlap would still require high SNR.
As a concrete example, Balakrishnan \emph{et. al.} showed that localizing a contiguous chain of activations in a line graph is impossible when $\frac{\mu}{\sigma} = o(\max \{\frac{1}{k}\sqrt{\frac{n-k}{m}}, \sqrt{\frac{1}{m}}\})$~\cite{balakrishnan2012recovering}.
The second term arises from the overlap between the contiguous blocks and is independent of both $k$ and $n$, demonstrating the challenge in distinguishing these overlapping clusters.


\subsection{Cluster-Specific Guarantees}
The main performance bottleneck for Algorithm~\ref{alg:exact} comes from testing whether a block of size 1 is active or not.
If there are no such singleton blocks, meaning that the cluster $C^\star$ is grouped into large blocks in $\Dcal$, we might expect that Algorithm~\ref{alg:exact} or a variant can succeed at lower SNR.
We formalize this idea here, analyzing an algorithm whose performance depends on how $C^\star$ is partitioned across the dendrogram $\Dcal$. 

We quantify this dependence with the notion of \emph{maximal} blocks $D \in \Dcal$ which are the largest blocks that are completely active.
Formally $D$ is \emph{maximal} if $D \cap C^\star = D$ and $D$'s parent is impure, and we denote this set of maximal blocks $\Mcal$.
If the maximal blocks are all large, then we can hope to obtain performance improvements.


The algorithm consists of two phases.
The first phase (the adaptive phase) is similar to Algorithm~\ref{alg:exact}.
With a threshold $z$, and energy parameter $\alpha$, we sense on a block $D$ with
\[
y_D = \sqrt{\alpha}\mathbf{1}_{D}^T\xb+\epsilon_D
\]
If $y_D > z$ we sense on $D$'s children and we construct a \textbf{pruned dendrogram} $\Kcal$ of all blocks $D$, for which $y_D > z$.
The pruned dendrogram is much smaller than $\Dcal$ but it retains a large fraction of $C^\star$.

Since we have significantly reduced the dimensionality of the problem we can now use a passive localization procedure to identify $C^\star$ at a low SNR.
In the passive phase, we construct an orthonormal basis $U$ for the subspace:
\[
\{\mathbf{1}_D : D \in \Kcal\}
\]
With another energy parameter $\beta$, we observe $y_{i} = \sqrt{\beta}u_i^T\xb + \epsilon_i$ for each basis vector $u_i$ and form the vector $\yb = \sqrt{\beta} U^T\xb + \mathbf{\epsilon}$ by stacking these observations.
We then construct the vector $\hat{\xb} = U\yb/\sqrt{\beta}$.
With the vector $\hat{\xb}$ we solve the following optimization problem to identify the cluster ($[n] = \{1,\ldots, n\}$):
\[
\hat{C} = \textrm{argmax}_{C \subseteq [n]} \frac{\mathbf{1}_C^T\hat{\xb}}{||\hat{\xb}||\sqrt{|C|}}
\]
which can be solved by a simple greedy algorithm.
A detailed description is in Algorithm~\ref{alg:approx}.
For a more concise presentation, in the following results, we omit the dependence on the maximum degree of the graph, $d$.
This localization guarantee is stated in terms of the distance $d(\hat{C}, C^\star) \triangleq 1 - \frac{|\hat{C} \cap C^\star|}{\sqrt{|\hat{C}||C^\star|}}$.

\begin{algorithm}[t]
\begin{algorithmic}
\REQUIRE Dendrogram $\Dcal$, sensing budget parameters $\alpha, \beta$.
\STATE Set $\alpha, z$ as in Theorem~\ref{thm:approx}. Initialize $\Kcal = \emptyset$.
\STATE (1) Let $D$ be the root of $\Dcal$.
\STATE (2) Obtain $y_D = \sqrt{\alpha} \mathbf{1}_D^T \xb + \epsilon_D$.
\STATE (3) If $y_D \ge z$ add $D$ to $\Kcal$ and recurse on (1)-(3) with $D$s children.
\STATE Construct $U$ an orthonormal basis for $\textrm{span}\{\mathbf{1}_D\}_{D
  \in \Kcal}$.
\STATE Sense $\yb = \sqrt{\beta}U^T \xb +
\epsilon$ and form $\hat{\xb} = U \yb/\sqrt{\beta}$.
\STATE Output $\hat{C} = \textrm{argmax}_{C \subseteq [n]}
\frac{\mathbf{1}_C^T\hat{\xb}}{||\hat{\xb}|| \sqrt{|C|}}$. 
\end{algorithmic}
\caption{Approximate Recovery}
\label{alg:approx}
\end{algorithm}

\begin{theorem}
Set $z$ so that $\mathbb{P}[\mathcal{N}(0,1) > \sigma z] \le \frac{\sqrt{5}-1}{d}$ and \footnote{We provide exact definitions of $\alpha$ and $\beta$ in the \ifthenelse{\equal{\version}{arxiv}}{appendix}{supplementary material~\cite{krishnamurthy2013arxiv}}.}
\[
\alpha = \frac{m}{n \log_2((\rho+k)\log n)}, \beta = \frac{m}{(\rho+k)\textnormal{polylog}(n,\rho)}
\]
where $k = |C^\star|$. If
\[
\frac{\mu}{\sigma} = \omega \left( \frac{(\rho + k) \textnormal{polylog}(n,\rho)}{\sqrt{m k}} + \sqrt{\frac{n \log_2((\rho+k) \log n)}{m|M_{\min}|^2}}\right)
\]
where $M_{\min} = \textrm{argmin}_{M \in \Mcal} M$, then $d(\hat{C}, C^\star) \rightarrow 0$ and the budget is $O(m)$. 
\label{thm:approx}
\end{theorem}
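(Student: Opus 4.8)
The plan is to analyze the two phases of Algorithm~\ref{alg:approx} separately, match the two summands of the SNR condition to the two phases, and then combine them on a single high-probability event for the adaptive phase.

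\emph{Reduction.} First I would note that after the passive phase
\[
\hat{\xb} = U\yb/\sqrt{\beta} = P_U\xb + U\epsilon/\sqrt{\beta},
\]
where $P_U$ is the orthogonal projection onto $\mathrm{span}\{\mathbf{1}_D : D \in \Kcal\}$ and $\eta := U\epsilon/\sqrt{\beta}$ is an isotropic Gaussian of per-coordinate variance $\sigma^2/\beta$ supported on the $\dim U$-dimensional range of $U$. Thus everything reduces to controlling, on a good event for phase one, the two quantities: (i) that $\mathbf{1}_{C^\star}\in\mathrm{span}(U)$, so that $P_U\xb=\xb$ exactly; and (ii) that $\dim U = O((\rho+k)\,\mathrm{polylog}(n,\rho))$, which simultaneously makes $\|\eta\|$ small and keeps the phase-two energy $\beta\dim U$ within $O(m)$.

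\emph{Adaptive phase: retention of maximal blocks.} For (i), a block $M\in\Mcal$ is retained exactly when every block on the root-to-$M$ path exceeds the threshold; since each such block contains $M\subseteq C^\star$, its mean response is at least $\sqrt{\alpha}\mu|M| \ge \sqrt{\alpha}\mu|M_{\min}|$. A union bound over the at most $|\Mcal|L$ blocks on these paths shows all maximal blocks are retained provided $\sqrt{\alpha}\mu|M_{\min}| - z \gtrsim \sigma\sqrt{\log(\cdot)}$; substituting the stated $\alpha$ and $z=\Theta(\sigma)$ (suppressing the degree $d$) yields precisely the second SNR term $\sqrt{n\log_2((\rho+k)\log n)/(m|M_{\min}|^2)}$. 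Then $\mathbf{1}_{C^\star}=\sum_{M\in\Mcal}\mathbf{1}_M\in\mathrm{span}(U)$ and $P_U\xb=\xb$.

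\emph{Size of $\Kcal$ (the crux).} For (ii) I would split $\Kcal$ into signal-carrying blocks and false positives. By Proposition~\ref{prop:dend} there are at most $\rho$ impure blocks per level over $L\le\log_2 n$ levels, and the fully active subtrees contribute $O(k)$ atoms, so the signal part has dimension $O(k+\rho\log n)$. The delicate part is the false positives: an inactive block enters $\Kcal$ only if its parent did and its own noise exceeds $z$, so the inactive part of $\Kcal$ is a branching process seeded by the $O(\rho\log n)$ inactive children of impure blocks. The golden-ratio threshold $\mathbb{P}[\mathcal{N}(0,1)>\sigma z]\le(\sqrt{5}-1)/d$ is calibrated exactly so that this branching has a geometric tail with ratio governed by the golden ratio, keeping the expected per-seed progeny bounded by a constant; a concentration argument then gives $\dim U = O((\rho+k)\,\mathrm{polylog}(n,\rho))$ with high probability. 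I expect this false-positive control to be the main obstacle, since a naive bound lets the branching run to depth $L$ and blow up polynomially in $n$ --- it is precisely the calibration of the threshold that tames it, and getting the counting of seeds and the tail bound right is the technical heart of the argument.

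\emph{Passive phase: scan statistic and conclusion.} On the good event I would write, for every $C$,
\[
\frac{\mathbf{1}_C^T\hat{\xb}}{\sqrt{|C|}} = \mu\sqrt{k}\,\bigl(1-d(C,C^\star)\bigr) + \frac{\mathbf{1}_C^T\eta}{\sqrt{|C|}},
\]
using $\langle\mathbf{1}_C,\mathbf{1}_{C^\star}\rangle/\sqrt{|C|k}=1-d(C,C^\star)$. Since $\|\hat{\xb}\|$ is common to all sets and $\hat{C}$ maximizes the objective, the function $g(C)=\mu\sqrt{k}(1-d(C,C^\star))+\mathbf{1}_C^T\eta/\sqrt{|C|}$ satisfies $g(\hat{C})\ge g(C^\star)$; rearranging and applying Cauchy--Schwarz, $|\mathbf{1}_C^T\eta|/\sqrt{|C|}\le\|\eta\|$ uniformly in $C$, gives the clean bound $d(\hat{C},C^\star)\le 2\|\eta\|/(\mu\sqrt{k})$. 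Finally $\|\eta\|=\|\epsilon\|/\sqrt{\beta}$ concentrates at $\Theta(\sigma\sqrt{\dim U/\beta})$; plugging in $\dim U=O((\rho+k)\,\mathrm{polylog})$ and $\beta=m/((\rho+k)\,\mathrm{polylog})$ bounds $d(\hat{C},C^\star)$ by a constant multiple of $\frac{\sigma}{\mu}\cdot\frac{(\rho+k)\,\mathrm{polylog}(n,\rho)}{\sqrt{mk}}$, which vanishes exactly under the first SNR term. A union bound over the two phase-failure events, together with an $O(m)$ budget accounting (phase one as in Proposition~\ref{prop:exact}, phase two equal to $\beta\dim U$), then completes the proof.
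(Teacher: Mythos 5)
Your proposal is correct and follows the paper's overall two-phase architecture: the same control of $|\Kcal|$ at $O((\rho+k)\,\mathrm{polylog}(n,\rho))$ via the calibrated threshold (this is precisely the paper's Lemma~\ref{lem:adapt_empty}, proved by the Chernoff-plus-induction branching argument you anticipate as the technical heart), the same energy accounting, and the same matching of the two SNR terms to the two phases. However, you diverge from the paper in two linked steps, and the divergence is genuine rather than cosmetic. First, the paper does \emph{not} require that all maximal blocks be retained: it bounds the expected lost signal energy $\mathbb{E}\|(I-\Pcal_U)\mathbf{1}_{C^\star}\|^2 \le \sum_{M\in\Mcal}|M|\,\mathbb{P}[M\notin\Kcal]$ via Lemma~\ref{lem:adaptive_keep} and applies Markov's inequality, so an exponentially small approximation-error term survives into the final bound, and the requirement that it vanish is what produces the second SNR condition. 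You instead union-bound over the at most $|\Mcal|L$ ancestors of maximal blocks to force $\Pcal_U\xb=\xb$ exactly on a good event --- the style of argument the paper reserves for Corollary~\ref{cor:approximate}; the cost is an extra $\sqrt{\log(|\Mcal|L)}$ factor, which is the same order of slop (in $\log\log n$ and $\log\rho$) that the paper's own derivation carries relative to the clean theorem statement, so neither route is tighter here. Second, your zero-approximation-error event buys you a cleaner final step: the basic inequality comparing $\hat{C}$ with $C^\star$ in the normalized objective, plus the uniform Cauchy--Schwarz bound $|\mathbf{1}_C^T\eta|/\sqrt{|C|}\le\|\eta\|$, gives $d(\hat{C},C^\star)\le 2\|\eta\|/(\mu\sqrt{k})$, linear in the noise and with no side condition; the paper's Lemma~\ref{lem:recovery} instead converts MSE to $d$-distance through a cosine/triangle-inequality computation, is quadratic in the noise, and needs the proviso $4\|\hat{\xb}-\xb\|^2<\mu^2 k$, which the paper must separately argue holds for $n$ large. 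Note that your two choices are coupled: your scan-statistic identity $\mathbf{1}_C^T\hat{\xb}/\sqrt{|C|}=\mu\sqrt{k}\,(1-d(C,C^\star))+\mathbf{1}_C^T\eta/\sqrt{|C|}$ relies on $\Pcal_U\xb=\xb$, so it would not survive under the paper's Markov route without reinstating an approximation-error term, and conversely the paper's Lemma~\ref{lem:recovery} is exactly what lets it tolerate losing small maximal blocks. Both routes establish the theorem at the stated polylog-suppressed SNR; yours is somewhat more streamlined, while the paper's degrades more gracefully when retention of the smallest maximal block is borderline.
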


The SNR requirement in the theorem decomposes into two terms, corresponding to the two phases of the algorithm, and our choice of $\alpha$ and $\beta$ distribute the sensing budget evenly over the terms, allocating $O(m)$ energy to each.
Note however, that the first term, corresponding to the passive phase, has a logarithmic dependence on $n$ while the second term, corresponding to the adaptive phase, has a polynomial dependence, so in practice one should allocate more energy to the adaptive phase.
With our allocation, the second term usually dominates, particularly for small $\rho$ and $k$, which is a regime of interest.
Then the required SNR is:

\[
\frac{\mu}{\sigma} = \omega\left(\frac{1}{|M_{\min}|}\sqrt{\frac{n}{m}
  \log_2((\rho + k) \log n)}\right)
\]

\begin{table}
\centering
\begin{tabular}{|c |c|}
\hline
Setting & $\frac{\mu}{\sigma}$\\
\hline\hline
One maximal block & $\omega \left(\frac{1}{k}\sqrt{\frac{n}{m} \log(k \log n) }\right)$\\
Uniform sizes & $\omega\left( \frac{\rho}{k}\sqrt{\frac{n}{m} \log(k \log n)}\right)$\\
Worst Case & $\omega\left( \sqrt{\frac{n}{m} \log(k \log n)}\right)$\\
\hline
\end{tabular}
\caption{Instantiations of Theorem~\ref{thm:approx}}
\label{tab:realizations}
\ifthenelse{\equal{\version}{arxiv}}{}{\vspace{-0.7cm}}
\end{table}

To more concretely interpret the result, we present sufficient SNR scalings for three scenarios in Table~\ref{tab:realizations}. We think of $\rho \ll |C^\star|$. 
The most favorable realization is when there is only one maximal block of size $k$. 
Here, there is a significant gain in SNR over unstructured recovery or even Algorithm~\ref{alg:exact}.

Another interesting case is when the maximal blocks are all at the same level in the dendrogram.
In this case, there can be at most $\rho d$ maximal blocks since each of the parents is impure and there can only be $\rho$ impure blocks per level.
If the maximal blocks are approximately the same size, then $|M_{\min}| \approx k/\rho$, and we arrive at the requirement in the second row of Table~\ref{tab:realizations}.
Again we see performance gains from structure, although there is some degradation.

Unfortunately, since the bound depends on $M_{\min}$, we do not always realize such gains.
When $M_{\min}$ is a singleton block (one node), our bound deteriorates to the third row of Table~\ref{tab:realizations}.
We remark that modulo $\log \log$ factors, this matches the SNR scaling for the unstructured (sparse) setting.
It also nearly matches the lower bound in Theorem~\ref{thm:lower}.

Theorem~\ref{thm:approx} shows that the size of $|M_{\min}|$ is the bottleneck to recovering $C^\star$.
If we are willing to tolerate missing the small blocks we can sense at lower SNR.

\begin{corollary}
Let $\tilde{C} = \bigcup_{M \in \Mcal, |M| \ge t} M$ and $k = |C^\star|$.
If:
\[
\frac{\mu}{\sigma} = \omega\left( \frac{(\rho+k)\textnormal{polylog}(n,\rho)}{\sqrt{m k}} + \frac{1}{t}\sqrt{\frac{n}{m}\textnormal{polylog}(n, \rho, j, t)}\right)
\]
then with probability $1-o(1)$, $d(\hat{C}, \tilde{C}) \rightarrow 0$ and $n \rightarrow \infty$.
\label{cor:approximate}
\vspace{-0.25cm}
\end{corollary}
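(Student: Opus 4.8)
\section*{Proof proposal for Corollary~\ref{cor:approximate}}

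The plan is to reuse the two-phase analysis behind Theorem~\ref{thm:approx} almost verbatim, changing only the recovery target from $C^\star$ to $\tilde{C}$ and the effective block-size floor from $|M_{\min}|$ to $t$. The SNR requirement again splits into a passive-phase term and an adaptive-phase term; the first term $\frac{(\rho+k)\textnormal{polylog}(n,\rho)}{\sqrt{mk}}$ is inherited unchanged, since the passive phase still senses the same subspace and incurs the same accumulated noise, while the second term is where the relaxation enters. By declining to detect maximal blocks of size below $t$, we weaken the detection demand on the adaptive phase precisely so that $t$ replaces $|M_{\min}|$, yielding $\frac{1}{t}\sqrt{\frac{n}{m}\textnormal{polylog}(n,\rho,j,t)}$.

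First I would analyze the adaptive (pruning) phase restricted to the target. Fix any maximal block $M \in \Mcal$ with $|M| \ge t$. Every ancestor $D \supseteq M$ in $\Dcal$ satisfies $|D \cap C^\star| \ge |M| \ge t$, so its noiseless measurement is $\sqrt{\alpha}\,\mu\,|D\cap C^\star| \ge \sqrt{\alpha}\,\mu\,t$. I would show this exceeds the threshold $z$ plus a Gaussian fluctuation of order $\sigma\sqrt{\log(|\Dcal_{\mathrm{active}}|/\delta)}$, so that the entire root-to-$M$ path survives into $\Kcal$ and $\mathbf{1}_M \in \textrm{span}\{\mathbf{1}_D\}_{D \in \Kcal}$. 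Union-bounding over the $O(L)$ ancestors of each target block and over all $j$ such blocks, and then substituting $\alpha = \frac{m}{n\log_2((\rho+k)\log n)}$ and the stated $z$, reproduces the claimed $\frac{1}{t}\sqrt{\frac{n}{m}\textnormal{polylog}(n,\rho,j,t)}$ scaling. In parallel I would bound the subspace dimension $\dim(U) = O((\rho+k)\,\textnormal{polylog}(n,\rho))$ exactly as in Theorem~\ref{thm:approx}: the active and impure blocks contribute $O((\rho+|\Mcal|)L)$ by Proposition~\ref{prop:dend}, and the spurious blocks are controlled by the height-limited branching process that the choice $\mathbb{P}[\mathcal{N}(0,1) > \sigma z] \le \frac{\sqrt{5}-1}{d}$ is designed to tame.

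Next I would handle the passive phase and the final optimization. Writing $P = UU^T$, the reconstruction is $\hat{\xb} = P\xb + U\epsilon/\sqrt{\beta}$, and since $\mathbf{1}_{\tilde{C}} = \sum_{M \in \Mcal,\,|M|\ge t}\mathbf{1}_M \in \textrm{span}(U)$ we have $P\mathbf{1}_{\tilde{C}} = \mathbf{1}_{\tilde{C}}$, so $\hat{\xb}$ retains full signal mass $\mu$ on every vertex of $\tilde{C}$. The noise obeys $\mathbb{E}\|U\epsilon/\sqrt{\beta}\|^2 = \dim(U)\,\sigma^2/\beta = O\!\big(\sigma^2 (\rho+k)^2\,\textnormal{polylog}(n,\rho)/m\big)$ with the stated $\beta$. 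Comparing this noise norm against the target signal norm $\mu\sqrt{|\tilde{C}|} = \Theta(\mu\sqrt{k})$ and running the same analysis of the normalized-correlation objective $\mathbf{1}_C^T\hat{\xb}/(\|\hat{\xb}\|\sqrt{|C|})$ as in Theorem~\ref{thm:approx} would give $d(\hat{C}, \tilde{C}) \to 0$ under the first SNR term.

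The main obstacle is leakage from the discarded small blocks. Writing $\xb = \mu\mathbf{1}_{\tilde{C}} + \mu\mathbf{1}_{C^\star \setminus \tilde{C}}$, the residual $\mu P\mathbf{1}_{C^\star\setminus\tilde{C}}$ need not vanish: impure ancestors of a size-$<t$ maximal block may still enter $\Kcal$ because they also contain large active blocks, so $\hat{\xb}$ can carry spurious mass on vertices of $C^\star\setminus\tilde{C}$ and, through the projection, even on $\tilde{C}$ itself. The crux of the argument is to show this leakage is small relative to $\mu\sqrt{|\tilde{C}|}$ so that the maximizer of the normalized correlation still aligns with $\tilde{C}$ rather than being pulled toward the small blocks. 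Here the choice of the forgiving metric $d(\hat{C},\tilde{C}) = 1 - \frac{|\hat{C}\cap\tilde{C}|}{\sqrt{|\hat{C}||\tilde{C}|}}$ is what makes the result attainable: a bounded symmetric difference contributes only $o(1)$, and the extra $\textnormal{polylog}$ and $j$ factors in the bound are exactly what is needed to absorb the union bounds over the surviving blocks and the projected-leakage terms. I would expect this leakage-control step, rather than either phase-specific noise computation, to be where the real work lies.
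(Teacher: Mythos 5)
Your proposal follows essentially the same route as the paper's proof. The paper likewise (i) re-runs the adaptive-phase analysis to show that every maximal block of size at least $t$ survives pruning, via a union bound over the $L$ ancestors of each of the at most $k/t$ such blocks, yielding the requirement $\frac{\mu}{\sigma} \ge \frac{1}{t\sqrt{\alpha}}\bigl(z + \sqrt{2\log(Lk/(t\delta))}\bigr)$, i.e.\ your $\frac{1}{t}\sqrt{\frac{n}{m}\mathrm{polylog}}$ term; (ii) concludes that $\|(I-\Pcal_U)\mathbf{1}_{\tilde{C}}\|^2 = 0$ with high probability, so the approximation-error term from Theorem~\ref{thm:approx} disappears; and (iii) reuses the passive-phase noise bound and Lemma~\ref{lem:recovery} (with target $\tilde{C}$) together with the same bound on $|\Kcal|$, arriving at $d(\hat{C},\tilde{C}) \le \frac{4\sigma^2}{\mu^2 k}\cdot\frac{2c|\Kcal|^2}{m}$.

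The one substantive difference is the leakage term you single out as the crux: the component of $\mu\mathbf{1}_{C^\star\setminus\tilde{C}}$ lying \emph{inside} $\mathrm{span}\{\mathbf{1}_D\}_{D\in\Kcal}$. Here your reading is sharper than the paper's own treatment: the paper only remarks that the energy of $C^\star$ \emph{orthogonal} to $U$ can be ignored, and its final bound contains the noise term alone, silently dropping $\mu^2\|\Pcal_U\mathbf{1}_{C^\star\setminus\tilde{C}}\|^2$. This term need not be small: a small maximal block passes the threshold with at least the constant probability $q$ that an empty block does (and typically more, since it carries signal), so its indicator can enter $\Kcal$, in which case its full mass survives the projection. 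Lemma~\ref{lem:recovery} then gives $d(\hat{C},\tilde{C})$ bounded by a noise term plus roughly $\|\Pcal_U\mathbf{1}_{C^\star\setminus\tilde{C}}\|^2/|\tilde{C}|$, and the latter is $o(1)$ only if $|C^\star\setminus\tilde{C}| = o(|\tilde{C}|)$; relatedly, the paper's denominator $\mu^2 k$ should really be $\mu^2|\tilde{C}|$, which is harmless only under the same condition. So the step you flag as ``where the real work lies'' is a genuine gap, but it is a gap in the published argument rather than something the paper resolves and you failed to reproduce: as stated, the corollary implicitly assumes the vertices in sub-threshold maximal blocks form a vanishing fraction of the cluster, and neither your proposal nor the paper supplies the argument that would remove that assumption.
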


In particular, we can recover all maximal blocks of size $t$ with SNR on the order of $\tilde{O}(\frac{1}{t}\sqrt{\frac{n}{m}})$, which clearly shows the gain in exploiting structure in this problem.

\subsection{Constructing Dendrograms}
\label{sec:dendrograms}
A general algorithm for constructing a dendrogram parallels the construction of spanning tree wavelets in Sharpnack \emph{et. al.}~\cite{sharpnack2013}.
Given a spanning tree $\Tcal$ for $G$, the root of the dendrogram is $V$, and the children are the subtrees around a balancing vertex $v \in \Tcal$.
The dendrogram is built recursively by identifying balancing vertices and using the subtrees as children.
See Algorithm~\ref{alg:dendrogram} for details.
It is not hard to verify that this algorithm produces a dendrogram according to Definition~\ref{ass:dendrogram}.

\begin{algorithm}[t]
\begin{algorithmic}
\REQUIRE $\Tcal$ a subtree of $G$ and initialize $v \in \Tcal$ arbitrarily .
\STATE (1) Let $T'$ be the component of $\Tcal \backslash \{v\}$ of largest size.
\STATE (2) Let $w$ be the unique neighbor of $v$ in $T'$.
\STATE (3) Let $T''$ be the component of $\Tcal \backslash \{w\}$ of largest size.
\STATE (4) Stop and return $v$ if $|T''| \ge |T'|$.
\STATE (5) $v \leftarrow w$. Repeat at (1).
\end{algorithmic}
\caption{FindBalance}
\label{alg:findbalance}
\end{algorithm}

\begin{algorithm}[t]
\begin{algorithmic}
\REQUIRE $\Tcal$ is a spanning tree of $G$.
\STATE Initialize $\Dcal = \{\{v : v \in \Tcal\}\}$.
\STATE Let $v$ be the output of \textrm{FindBalance} applied to $\Tcal$. 
\STATE Let $\Tcal_1, \ldots, \Tcal_{d_v}$ be the connected component of $\Tcal
\setminus v$ and add $v$ to the smallest component. 
\STATE Add $\{v : v \in \Tcal_i\}$ for each $i$ as children of $\Tcal$ to
$\Dcal$.
\STATE Recurse at (2) for each $\Tcal_i$ as long as $|\Tcal_i| \ge 2$. 
\end{algorithmic}
\caption{BuildDendrogram}
\label{alg:dendrogram}
\end{algorithm}

\section{Experiments}
We conducted two simulation studies to verify our theoretical results and examine the performance of our algorithms empirically.
First, we empirically verify the SNR scaling in Proposition~\ref{prop:exact}.
In the second experiment, we compare both of our algorithms with the algorithm of Haupt and Nowak~\cite{HauptNowak}, which is an unstructured adaptive compressed sensing procedure with state-of-the-art performance.

\begin{figure}
\centering
\ifthenelse{\equal{\version}{arxiv}}{
\includegraphics[scale=0.3]{./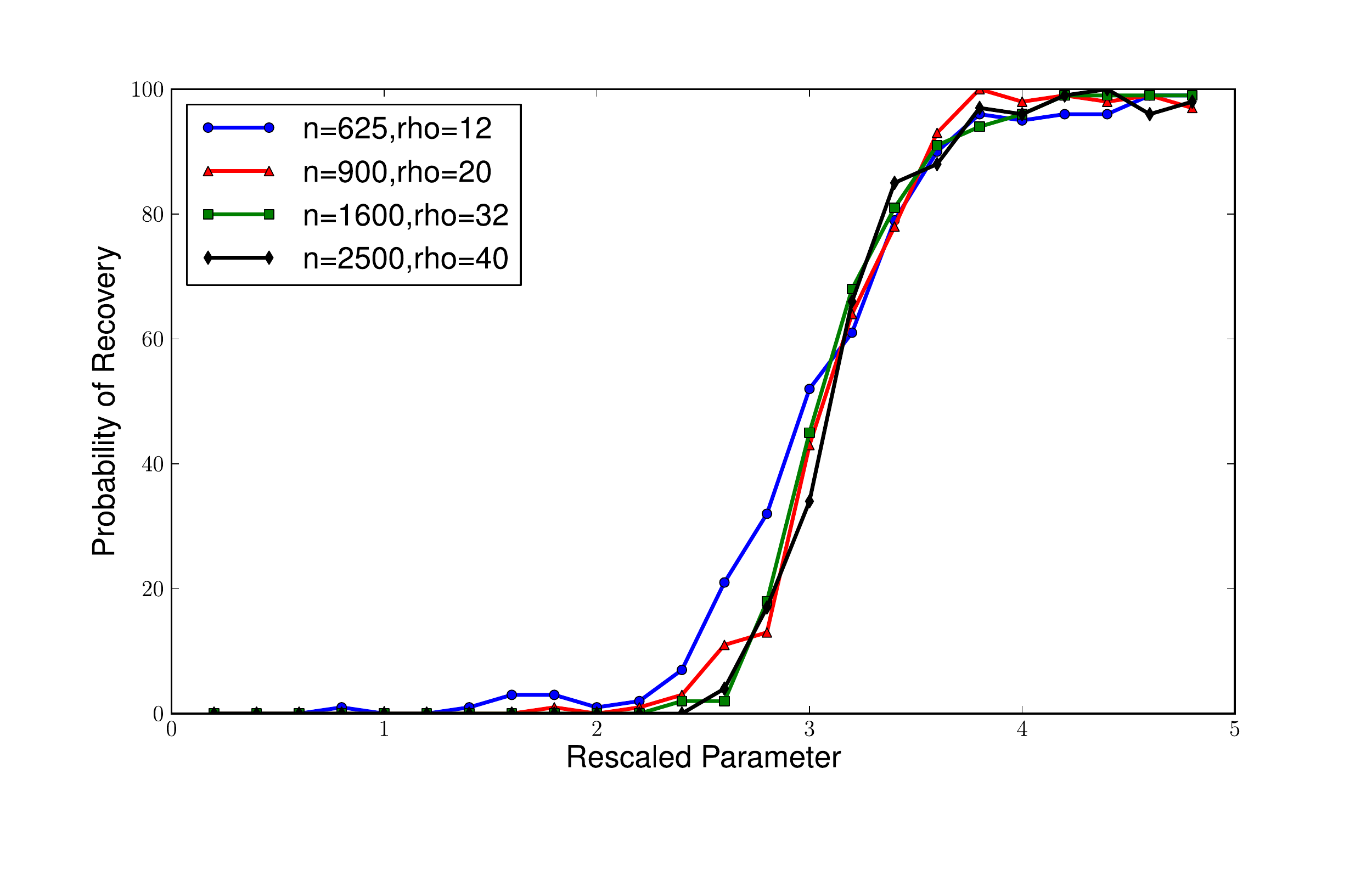}\vspace{-0.7cm}
}{
\includegraphics[scale=0.3]{./exact_adaptive_threshold_torus.pdf}\vspace{-0.7cm}
}
\caption{Probability of success for Algorithm~\ref{alg:exact} as a function of the
  $\theta = \frac{\mu}{\sigma} \sqrt{\frac{m}{n \log_2 \rho
      \log (\rho \log(n))}}$ for the torus.}
\label{fig:thresholds}
\ifthenelse{\equal{\version}{arxiv}}{}{\vspace{-0.7cm}}
\end{figure}

\ifthenelse{\equal{\version}{arxiv}}{
\begin{figure}[t]
\centering
\includegraphics[scale=0.30]{./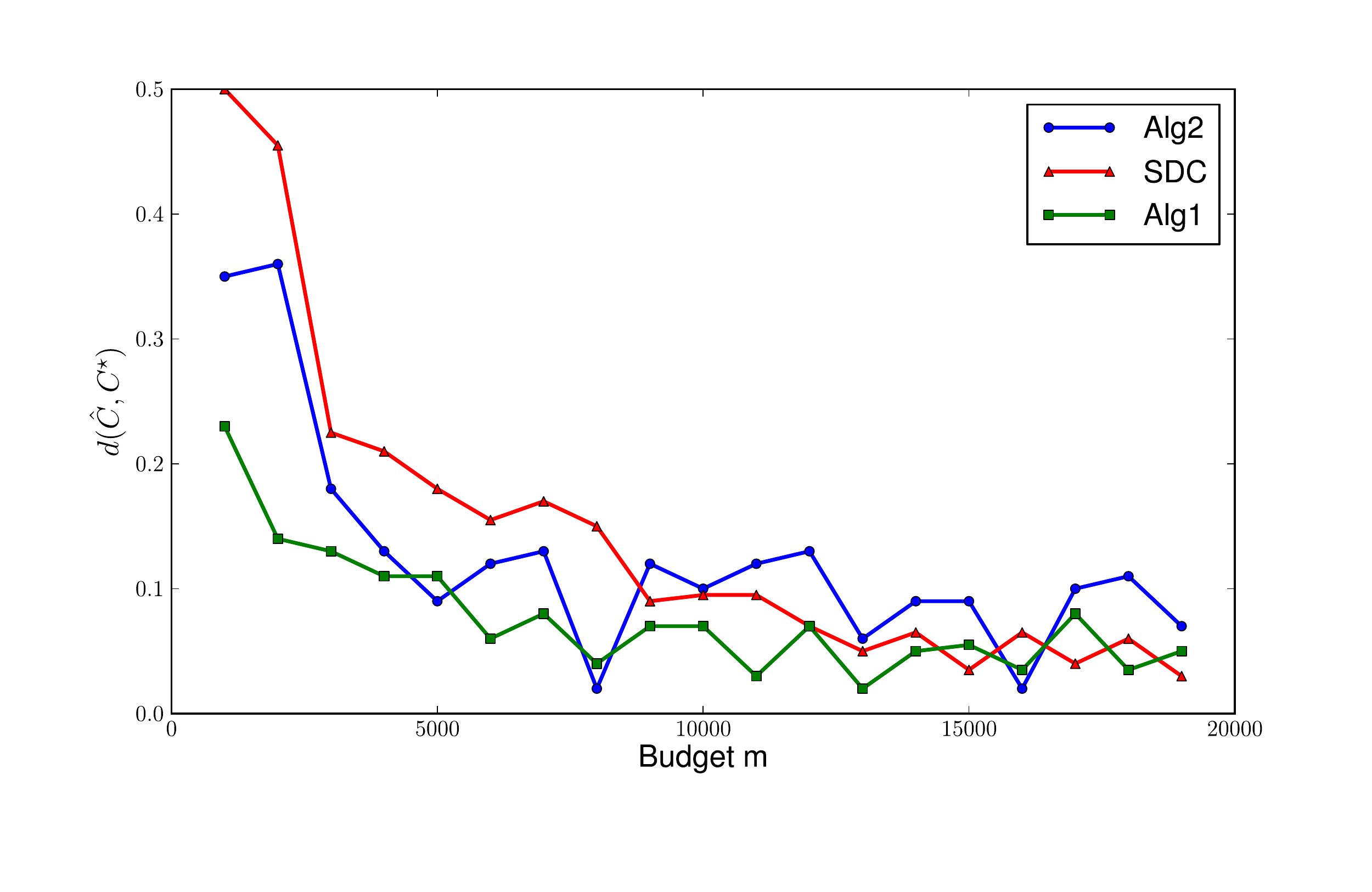}\hspace{-0.5cm}
\includegraphics[scale=0.30]{./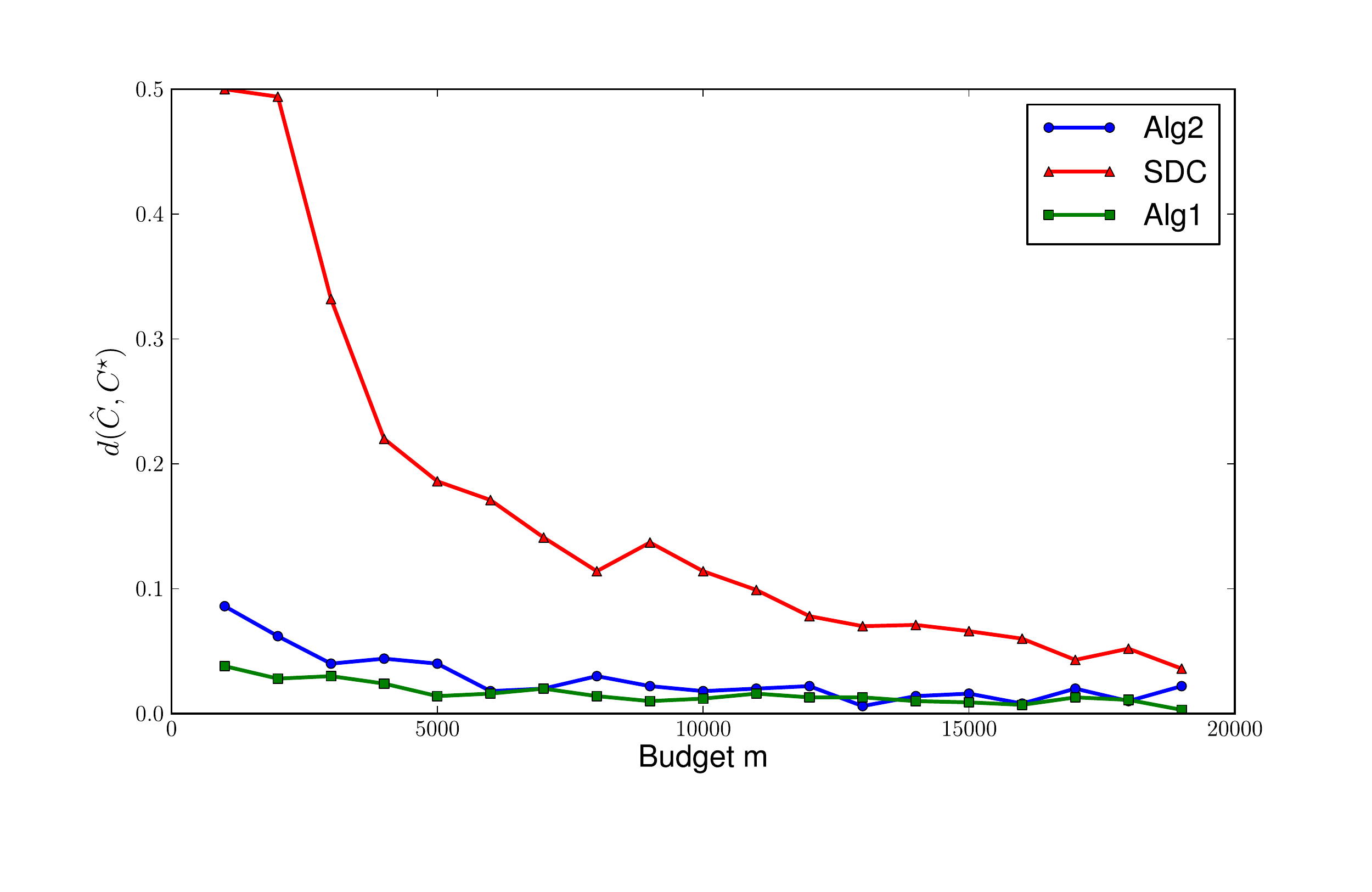}
\caption{Error as a function of $m$ for $n=512$ and $k=10,50$ (top, bottom)
  demonstrating the gains from exploiting structure. Here $G$ is a line graph
  and $\rho=2$, resulting in one connected cluster.}
\label{fig:errors}
\end{figure}
}{
\begin{figure}
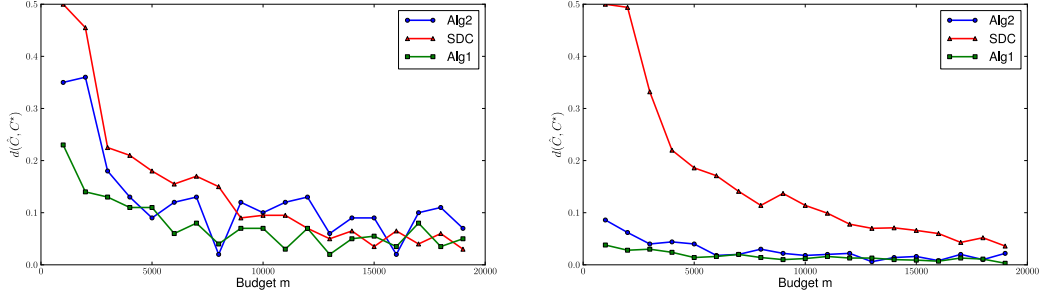

\centering
\includegraphics[scale=0.30]{./error_scaling_512_10.pdf}\\\vspace{-0.7cm}
\includegraphics[scale=0.30]{./error_scaling_512_50.pdf}\vspace{-0.7cm}
\caption{Error as a function of $m$ for $n=512$ and $k=10,50$ (top, bottom)
  demonstrating the gains from exploiting structure. Here $G$ is a line graph
  and $\rho=2$, resulting in one connected cluster.}
\label{fig:errors}
\vspace{-0.7cm}
\end{figure}
}
In Figure~\ref{fig:thresholds} we plot the probability of successful recovery of $C^\star$ as a function of a rescaled parameter.
This parameter $\theta(n,m,\rho,\frac{\mu}{\sigma}) = \frac{\mu}{\sigma} \sqrt{\frac{m}{n \log_2 \rho \log(\rho \log n)}}$ was chosen so that the condition on the SNR in Proposition~\ref{prop:exact} is equivalent to $\theta = c$ for some constant $c$.
Proposition~\ref{prop:exact} then implies that with this rescaling, the curves should all line up, which is the phenomenon we observe in Figure~\ref{fig:thresholds}.
Here $G$ is the two dimensional torus and $\Dcal$ was constructed using Algorithm~\ref{alg:dendrogram}.

In Figure~\ref{fig:errors} we plot the error, measured by $d(\hat{C}, C^\star)$, as a function of $m$ for three algorithms.
We use both Algorithms~\ref{alg:exact} and~\ref{alg:approx} as well as the sequentially designed compressed sensing algorithm (SDC)~\cite{HauptNowak},which has near-optimal performance for unstructured sparse recovery. 
Here $G$ is the line graph, $\Dcal$ is the balanced binary dendrogram, and $\rho=2$ so each signal is a contiguous block.

In the first figure, $k=10$ and since the maximal clusters are necessarily small, there should be little benefit from structure.
Indeed, we see that all three algorithms perform similarly.
This demonstrates that in the absence of structure, our procedures perform comparably to existing approaches for unstructured recovery.
When $k=50$ (the second figure), we see that both Algorithms~\ref{alg:exact} and~\ref{alg:approx} outperform SDC, particularly at low SNRs.
Here, as predicted by our theory, Algorithm~\ref{alg:approx} can identify a large part of the cluster at very low SNR by exploiting the cluster structure.
In fact Algorithm~\ref{alg:exact} empirically performs well in this regime although we do not have theory to justify this.

\section{Conclusion}
We explore the role of structure and adaptivity in the support recovery problem, specifically in localizing a cluster of activations in a network.
We show that when the cluster has small cut size, exploiting this structure can result in performance improvements in terms of signal-to-noise ratios sufficient for cluster recovery.
If the true cluster $C^\star$ coincides with a dendrogram over the graph, then weaker SNRs can be tolerated.
These results do not contradict the necessary conditions for this problem, which shows that one cannot do much better than the unstructured setting for exact recovery.

While our work contributes to understanding the role of structure in compressive sensing, our knowledge is still fairly limited.
We now know of some specific instances where structured signals can be localized at a very weak SNR, but we do not have a full characterization of this effect.
Our goal was to give such a precise characterization, but the generality of our set-up resulted in an information-theoretic barrier to demonstrating significant performance gains.
An interesting direction for future research is to precisely quantify settings that are not too general nor very specific when structure can lead to improved sensing performance and to develop algorithms that enjoy these gains.

\appendix
\section{Proof of Theorem~\ref{thm:lower}}
The proof is a simple extension of Theorem 2 from Davenport and
Arias-Castro~\cite{DavenportCastro}. In particular, if $\rho > d$ then
$\Ccal_{\rho}$ contains all one-sparse signals. Restricting to just these
signals, the results from~\cite{DavenportCastro} imply that we cannot even
detect if the activation is in the first or second half of the vertices unless
$\frac{\mu}{\sigma} \ge \sqrt{\frac{n}{m}}$. This results in the lower bound.

If we are also interested in introducing the cluster size parameter $k$ we are
can prove a similar lower bound by reduction to one-sparse testing. If $\rho > k
d$ then all ${n \choose k}$ support patterns are in $\Ccal_{\rho}$ so we are
again in the unstructured setting. Here, the results
from~\cite{arias2011fundamental} give the lower bound. 

If $\rho < k d$ then we are in a structured setting in that not all ${n \choose
  k}$ support patterns are possible. However, if we look at the cycle graph,
each contiguous block contributes $2$ to the cut size, so if $\rho \ge 4$ we are
allowed at least two contiguous blocks. If $k-1$ of the activations lie in one
contiguous block, then the last activation can be distributed in any of the
$n-k+1$ remaining vertices. Even if the localization procedure was provided with
knowledge of the location of the $k-1$ activations, an SNR of
$\frac{\mu}{\sigma} \ge \sqrt{\frac{n-k}{m}}$ would be necessary for identifying
the last activation.

\section{Proof of Proposition~\ref{prop:exact}}
Recall that for any block $D$ that we sense, we obtain $y_D =
\sqrt{\alpha}\mathbf{1}_D^T \mu \mathbf{1}_{C^\star} + \epsilon_d$. Consider a
single block $D$, Gaussian tail bounds reveal the following facts:
\begin{enumerate}
\item If $D \cap C^\star = \emptyset$, then with probability $\ge 1-\delta$, $y_D \le
  \sigma \sqrt{2 \log(1/\delta)}$.
\item If $D \cap C^\star = D$, then with probability $\ge 1-\delta$, $y_d \ge
  \mu \sqrt{\alpha} |D| - \sigma \sqrt{2 \log(1/\delta)}$.
\item Otherwise, with probability $1-\delta$: $\mu \sqrt{\alpha} - \sigma
  \sqrt{2 \log(1/\delta)} \le y_D \le \mu \sqrt{\alpha} (|D|-1) + \sigma \sqrt{2
    \log(1/\delta)}$.
\end{enumerate}
The above facts reveal that if:
\[
\frac{\mu}{\sigma} \ge 2\sqrt{\frac{2 \log(1/\delta)}{\alpha}}
\]
then we will correctly identify if $D$ is empty, full or impure. Assuming we
perform this test correctly, we only refine $D$ if it is impure, and
Proposition~\ref{prop:dend} reveals that at most $\rho$ clusters can be impure
per level. For each of these $\rho$ clusters that we refine, we search on at
most $d_{\max}$ clusters at the subsequent level. The total budget that we use
is (recall that $L$ is the height of $\Dcal$):
\begin{eqnarray*}
\sum_{l=0}^L \alpha \min\{n, \rho d \frac{n}{2^l}\} &=& \alpha \left(\sum_{l=0}^{\log_2(\rho d) -1} n + \sum_{l=\log_2(\rho d)}^L \frac{\rho d n}{2^l}\right)\\
& = & \alpha \left(n \log_2(\rho d) + \sum_{l=0}^{L - \log_2(\rho d)}  \frac{\rho}{2^{\log_2(\rho d)}} \frac{n}{2^l}\right)\\ 
&\le& \alpha (n \log_2(\rho d) + 2n) \le 3\alpha n \log_2(\rho d)
\end{eqnarray*}
Setting $\alpha$ as in the Proposition makes this quantity smaller than $m$. 
Finally, we take a union bound over the $\rho d L+1$ blocks that we sense on ($\rho d L$ per level not counting the root and one more for the root) and plug in our bound on $\alpha$ to arrive at the final rate of:
\[
\frac{\mu}{\sigma} \ge \sqrt{\frac{24 n \log_2(\rho d) \log((\rho d L+1)/\delta)}{m}}
\]
The threshold $\tau$ is specified to ensure that that failure probability for all of the tests is at most $\delta$.
In the algorithm, thresholding at $\tau$ and $\mu \sqrt{\alpha} - \tau$ favors identifying clusters as impure over identifying them as pure.
In practice these thresholds may improve performance because it is worse to incorrectly mark an impure block as pure (leading to incorrect recovery of the cluster) than it is to incorrectly mark a pure block as impure (leading to an increase in measurement budget).
Using these thresholds has no ramifications on our theory. 

\section{Proof of Theorem~\ref{thm:approx}}

To prove Theorem~\ref{thm:approx} we must analyze each phase of the procedure.
We first turn to the adaptive phase. 
By setting the threshold $z$ correctly, we retain a large fraction of $C^\star$ while removing a large number of inactive nodes.
We call the set of blocks retained by the adaptive phase $\Kcal$ and we measure the fraction of $C^\star$ lost by the projection onto the basis $U$ for the subspace spanned by the blocks in $\Kcal$.
In the passive phase, we use the fact that $|\Kcal|$ is small to bound the MSE of the reconstruction $\mathbb{E}||\hat{\xb}-\xb||^2$.
We then translate this MSE guarantee into an error guarantee for $\hat{C}$.

Throughout, let $r$ denote the total number of impure blocks in the dendrogram and let $L$ denote the height of $\Dcal$. 
Note that $r \le \rho L$ and $L \le \lceil \log_2 n \rceil$.
Recall that $k = |C^\star|$.

With all the results in the following sections we will be able to bound
$d(\hat{C}, C^\star)$ with probability $\ge 1 - 3\delta$ as:
\begin{eqnarray}
d(\hat{C}, C^\star) &\le & \frac{4}{\mu^2k} ||\hat{\xb} -
\mu\mathbf{1}_{C^\star}||^2 \label{eq:recovery_use}\\ 
& \le &
\frac{4 c\sigma^2|\Kcal|}{\mu^2 k \beta} + \frac{4L}{\delta}\exp\left\{-1/8
\alpha |M_{\min}|^2 \mu^2/\sigma^2\right\} \label{eq:mse_use}\\ 
& \le & \frac{8 c \sigma^2L^2 (3rd \log(rdL/\delta) + k)^2}{\mu^2km} + \\ 
& & +  \frac{4 L}{\delta} \exp\left\{-\frac{1}{48} \frac{m |M_{\min}|^2\mu^2/\sigma^2}{n \log_2(4rd^2\log(rdL/\delta) + k)}\right\}
\end{eqnarray}
Here Equation~\ref{eq:recovery_use} follows from our analysis of the optimization phase (Lemma~\ref{lem:recovery}), and Equation~\ref{eq:mse_use} follows from the bounds in Section~\ref{sec:passive}.
The last step follows by plugging in bounds on $\alpha$ and $\beta$ if we want to allocate $m/2$ energy to each phase.
Specifically the bound on $\alpha$ comes from Lemma~\ref{lem:adapt_energy} while the bound for $\beta$ comes from Lemma~\ref{lem:adapt_empty}.
Lemma~\ref{lem:adapt_energy} shows that the energy used in the first phase of the algorithm is $\le \alpha (3n \log_2(4rd^2 \log(rdL/\delta) + k))$ and allocating $m/2$ energy to this phase gives the bound on $\alpha$.
On the other hand Lemma~\ref{lem:adapt_empty} shows that the after the adaptive phase, the pruned dendrogram contains at most $L (3rd \log(rdL/\delta) + k)$ blocks, which is precisely the dimensionality of the subspace we sense over in the passive phase of the procedure. 
To summarize, setting:
\begin{eqnarray}
\alpha &\le& \frac{m}{6n \log_2(4rd^2 \log(rdL/\delta) + k)}\\
\beta &\le& \frac{m}{6 rdL  \log(rdL/\delta) + 2Lk}
\end{eqnarray}
ensures that the sensing budget is no more than $m$. 

We obtain the final result by plugging in the bounds on $L \le \lceil \log_2 n \rceil$ and $r \le \rho \lceil \log_2 n \rceil$. With these bounds, the first term is $o(1)$ as long as:
\[
\frac{\mu}{\sigma} = \omega\left(\frac{\rho d \log_2^2n \log(\rho d
  \log_2^2n/\delta) + k \log_2 n}{\sqrt{m k}}\right)
\]
The second term is $o(1)$ when:
\[
\frac{\mu}{\sigma} = \omega\left(\frac{1}{|M_{\min}|}\sqrt{\frac{n}{m} \log_2(\rho d^2  \log_2 n
  \log (\rho d \log_2^2n/\delta) + k)\log(\log_2 n/\delta) }\right)
\]
Note that we can only apply Lemma~\ref{lem:recovery} if the right hand side of Equation~\ref{eq:recovery_use} is $\le 1$.
However if $\mu/\sigma$ meets the above two requirements, then that quantity is going to zero, so for $n$ large enough this will certainly be the case.

\subsection{The Adaptive Phase}

Our analysis will focus on recovering \emph{maximal} blocks $D \in \Dcal$, which
are the largest blocks that contain only activated vertices. Formally, $D$ is
maximal if $D \cap C^\star = D$ and if $D$'s parent contains some unactivated
vertices. We are also interested in identifying \emph{impure} blocks, (blocks that
partially overlap with $C^\star$). Suppose there are $r$ such impure
clusters.

The first lemma helps us bound the number empty nodes that we retain:
\begin{lemma}
Threshold at $\sigma z$ where $\mathbb{P}[\mathcal{N}(0,1) > z] \le q$ and:
\[
q = \frac{\sqrt{5}-1}{2 d_{\max}}
\]
Then with probability $\ge 1-\delta$ the pruned dendrogram contains at most $3rd
\log(rdL/\delta) + |C^\star|$ blocks per level for a total of at most $L(3rd
\log(rdL/\delta) + |C^\star|)$.
\label{lem:adapt_empty}
\end{lemma}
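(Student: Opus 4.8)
The plan is to classify, level by level, the blocks retained by the adaptive phase (the blocks placed in $\Kcal$) according to their overlap with $C^\star$, and to bound the count of each type separately. At a fixed level the blocks of $\Dcal$ are pairwise disjoint, so the easy cases go first: the \emph{full} blocks ($D \subseteq C^\star$) at a given level are disjoint subsets of $C^\star$ and hence number at most $|C^\star|$, while Proposition~\ref{prop:dend} bounds the \emph{impure} blocks per level by $\rho$, and trivially by the total impure count $r$. All the difficulty is therefore in controlling the \emph{empty} blocks ($D \cap C^\star = \emptyset$) that are retained purely through noise; these will be shown to contribute at most $O(rd\log(rdL/\delta))$ in total, which also bounds their number at any single level.

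The second step sets up the branching structure of the empty blocks. Since a child of a full block is full and a child of an empty block is empty, every empty block the algorithm ever senses lies in a subtree of $\Dcal$ rooted at an \emph{empty child of an impure block}; as there are at most $r$ impure blocks, each with at most $d$ children, there are at most $rd$ such roots. On an empty block the signal vanishes, so $y_D = \epsilon_D$ and $D$ is retained with probability $\mathbb{P}[\mathcal{N}(0,1) > z] \le q$; because the $\epsilon_D$ are independent and a block is sensed only when its parent is retained, the retained empty blocks form a forest of at most $rd$ independent Galton--Watson trees whose offspring law is stochastically dominated by $\mathrm{Binomial}(d,q)$, of mean $dq = \tfrac{\sqrt{5}-1}{2} =: s$. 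The whole point of the threshold is that $s < 1$, so this forest is \emph{subcritical}.

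The crux, and the step I expect to be the main obstacle, is converting subcriticality into a high-probability bound on the total progeny $X$ of this forest. I would use the exploration-process representation, writing $X = \inf\{t : \sum_{i=1}^{t}(1 - Z_i) = rd\}$ with $Z_i$ i.i.d.\ copies of the offspring count, so that the increments $1 - Z_i$ are bounded and have positive drift $1 - s = s^2$ (the identity $s + s^2 = 1$ is exactly why $s = \tfrac{\sqrt5-1}{2}$ is chosen, keeping the drift and constants clean). A Chernoff/Bernstein bound on $\sum_i Z_i$ then gives $\mathbb{P}[X > 3rd\log(rdL/\delta)] \le \delta$. The delicate point is the dependence among blocks sharing ancestors; this is handled by conditioning on the impure skeleton first, after which the retention decisions of the empty blocks are genuinely independent and the random-walk bound applies.

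Finally I would combine the pieces. On the event $\{X \le 3rd\log(rdL/\delta)\}$, every level carries at most this many retained empty blocks (indeed that many in total), and this term simultaneously absorbs the at most $r$ impure blocks per level, since $r \le rd\log(rdL/\delta)$. Adding the at most $|C^\star|$ full blocks per level yields the per-level bound $3rd\log(rdL/\delta) + |C^\star|$, and multiplying by the $L \le \lceil\log_2 n\rceil$ levels gives the stated total $L\bigl(3rd\log(rdL/\delta) + |C^\star|\bigr)$. Note that the total-progeny estimate already controls all levels at once, so no separate union bound over levels is required.
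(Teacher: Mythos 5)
Your proof is correct in substance and reaches the stated bound, but its probabilistic core is genuinely different from the paper's. Both arguments begin the same way: the retained empty blocks live in the at most $rd$ empty subtrees hanging off impure blocks, and the blocks meeting $C^\star$ at any level are disjoint, hence at most $|C^\star|$ of them (the paper counts impure and full blocks together as ``active'' blocks within this $|C^\star|$ budget, so it never needs your absorption trick of folding the $\le r$ impure blocks into the noise term). The divergence is in how the noise-retained empty blocks are controlled. The paper fixes one empty subtree and runs an induction over levels: if $t_{l}$ is the number of retained blocks at level $l$, a Chernoff bound gives $t_l \le dq\,t_{l-1} + \sqrt{3dq\,t_{l-1}\log(L/\delta)}$, the inductive hypothesis $t_{l-1}\le 3\log(L/\delta)$ closes the induction, and union bounds over the $rd$ subtrees and $L$ levels give $3rd\log(rdL/\delta)$ \emph{per level}, i.e.\ $3rdL\log(rdL/\delta)$ in total. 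You instead view the retained empty blocks as a subcritical Galton--Watson forest (offspring dominated by $\mathrm{Binomial}(d,q)$, mean $s=dq<1$) and bound the \emph{total progeny} via the hitting-time representation and Bernstein. This buys a strictly stronger conclusion: your $3rd\log(rdL/\delta)$ bounds the empty blocks summed over all levels, shaving a factor of $L$ off that part of the $|\Kcal|$ bound (which would slightly improve the passive-phase dimension and the choice of $\beta$ downstream), and it needs no union bound over levels. Two caveats. First, your aside about ``conditioning on the impure skeleton'' is unnecessary: the classification of blocks as empty/impure/full is deterministic given $C^\star$ and $\Dcal$; independence of the per-block noise is all the Galton--Watson representation requires. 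Second, constants: in your Bernstein step the offspring are bounded by $d$, so the range term makes the exponent scale like $r\log(rdL/\delta)$ with a small constant, and the specific constant $3$ is not guaranteed for small $r$ --- you would need a larger constant or a sharper progeny tail. This is not a reason for concern relative to the paper, whose own induction closes only if $dq+\sqrt{dq}\le 1$, a condition that actually fails for the stated $q=\frac{\sqrt5-1}{2d_{\max}}$ (it requires $dq \le \bigl(\frac{\sqrt5-1}{2}\bigr)^2$); amusingly, the paper's $q$ is exactly the one for which \emph{your} identity $s+s^2=1$ holds, so the stated threshold is better matched to your argument than to the paper's.
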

\begin{proof}
For the first claim, we analyze the adaptive procedure on an empty dendrogram,
showing that we retain no more than $3 \log(L/\delta)$ per level. The proof is
by induction on the level $l$. Let the inductive hypothesis be that $t_l \le 3
\log (L/\delta)$ where $t_l$ is the number of nodes retained at the $l$th
level. Then by the Chernoff bound,
\[
\mathbb{P}[t_l - \mathbb{E}t_l \ge \epsilon] \le \exp\{ \frac{-\epsilon^2}{3 \mathbb{E}t_l}\}
\]
$\mathbb{E}t_l$ can be bounded by $dqt_{l-1}$ since each of the blocks that we
retain at the $l-1$st level can have at most $d$ children and since we retain
each block with probability $q$ in expectation. With a union bound across all
$L$ levels, we have that with probability $\ge 1-\delta$:
\[
t_l \le dq t_{l-1} + \sqrt{3dqt_{l-1} \log(L/\delta)}
\]
Applying the inductive hypothesis and the definition of $q$:
\[
t_l \le 3 \log(L/\delta) (dq + \sqrt{dq}) \le 3 \log(L/\delta)
\]
Thus for each empty dendrogram, we retain at most $3L \log(L/\delta)$.

Each of the $r$ impure clusters can spawn off at most $d$ empty subtrees in the
dendrogram. Taking a union bound over each of these $rd$ empty subtrees shows
that at most $3rdL \log(rdL/\delta)$ empty blocks are retained. There are at
most $|C^\star| L$ active blocks, which gives us a bound on the size of $\Kcal$.
\end{proof}

Next we compute the probability that we fail to retain a maximal cluster:
\begin{lemma}
For any maximal cluster $M$, the probability that $M \notin \Kcal$ is bounded
by:
\[
\mathbb{P}[M \notin \Kcal] \le L \exp\{-1/2 (\sqrt{\alpha}|M| \mu/\sigma - z)^2\}
\]
as long as $\sqrt{\alpha} |M| \mu > \sigma z$.
\label{lem:adaptive_keep}
\end{lemma}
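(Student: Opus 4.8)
The plan is to use the tree structure of the adaptive phase: the block $M$ ends up in $\Kcal$ precisely when every block on the root-to-$M$ path in $\Dcal$ survives the threshold test, so failure to retain $M$ is a union of failures at these ancestors, each of which I can control with a single Gaussian tail bound. The only genuine content is observing that each ancestor carries at least as much signal as $M$ itself, and that the adaptivity does not spoil the per-block tail estimate.

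Concretely, I would first write the path from the root to $M$ as $A_0 = V \supseteq A_1 \supseteq \dots \supseteq A_\ell = M$, where $\ell \le L$ by the height bound in Definition~\ref{ass:dendrogram}. The procedure in Algorithm~\ref{alg:approx} tests a block only after all of its strict ancestors have passed (i.e.\ exceeded the threshold $\sigma z$ as in Lemma~\ref{lem:adapt_empty}), so $M \in \Kcal$ holds exactly when $y_{A_j} \ge \sigma z$ for every $j = 0,\dots,\ell$. Hence $\{M \notin \Kcal\} = \bigcup_{j=0}^{\ell} \{y_{A_j} < \sigma z\}$, and a union bound gives $\mathbb{P}[M \notin \Kcal] \le \sum_{j=0}^{\ell} \mathbb{P}[y_{A_j} < \sigma z]$. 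The root-to-$M$ path contains at most $L+1$ blocks, which the prefactor $L$ in the statement bounds (up to the immaterial $+1$, since the height is $L \le \log_2 n$).

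Next I would bound each term. Because $M \subseteq C^\star$ and $M \subseteq A_j$, we have $A_j \cap C^\star \supseteq M$, so $\mathbf{1}_{A_j}^T \xb = \mu |A_j \cap C^\star| \ge \mu |M|$. Writing $y_{A_j} = \sqrt{\alpha}\mu |A_j \cap C^\star| + \epsilon_{A_j}$ with $\epsilon_{A_j} \sim \mathcal{N}(0,\sigma^2)$, the event $y_{A_j} < \sigma z$ is $\epsilon_{A_j}/\sigma < z - \sqrt{\alpha}\mu |A_j \cap C^\star|/\sigma \le z - \sqrt{\alpha}\mu |M|/\sigma$. Under the hypothesis $\sqrt{\alpha}|M|\mu > \sigma z$ this threshold is negative, so setting $t = \sqrt{\alpha}|M|\mu/\sigma - z > 0$ and applying the standard bound $\mathbb{P}[\mathcal{N}(0,1) > t] \le e^{-t^2/2}$ yields $\mathbb{P}[y_{A_j} < \sigma z] \le \exp\{-\tfrac{1}{2}(\sqrt{\alpha}|M|\mu/\sigma - z)^2\}$. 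Summing the identical bound over the at most $L$ path blocks gives the claimed inequality.

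The step I expect to require the most care is making the union bound rigorous for a \emph{data-dependent} collection of sensed blocks. The resolution is that whether the algorithm reaches and tests $A_j$ is determined entirely by the noise at the strict ancestors $A_0,\dots,A_{j-1}$, which is independent of $\epsilon_{A_j}$; conditioning on $\{A_j \text{ is sensed}\}$ therefore leaves $\epsilon_{A_j}$ with its $\mathcal{N}(0,\sigma^2)$ marginal, so $\mathbb{P}[A_j \text{ sensed and } y_{A_j} < \sigma z] \le \mathbb{P}[\epsilon_{A_j}/\sigma < z - \sqrt{\alpha}\mu |M|/\sigma]$ and the per-term tail estimate above applies unchanged. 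Once this independence observation is in place, the remainder is the direct calculation sketched above, and I would simply record the off-by-one in the ancestor count as absorbed into the prefactor $L$.
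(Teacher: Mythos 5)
Your proof is correct and follows essentially the same route as the paper's: a union bound over the (at most $L$) ancestors of $M$, each of which contains at least $|M|$ activations, followed by a standard Gaussian tail bound using the positivity guaranteed by $\sqrt{\alpha}|M|\mu > \sigma z$. Your two added refinements---handling the data-dependence of which blocks get sensed via independence of $\epsilon_{A_j}$ from the ancestors' noise, and flagging the immaterial $L$ versus $L+1$ count---are points the paper's proof glosses over, and they only make the argument more rigorous.
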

\begin{proof}
We fail to retain a maximal cluster $M$ if we throw away any of its ancestors in
the dendrogram. All the ancestors of $M$ have at least $M$ activations so
$\mathbb{E}y_D \ge \mu \sqrt{\alpha} |M|$ for each of $M$'s ancestors. All $y_D$
have the same variance $\sigma^2$. By a union bound and Gaussian tail inequality
the failure probability is at most:
\[
\mathbb{P}[M \notin \Kcal] \le L \mathbb{P}[y_M < \sigma z] \le L \exp\{-1/2
(\sqrt{\alpha}|M| \mu/\sigma - z)^2\}
\]
\end{proof}

To complete the adaptive phase, we must set $\alpha$ so that we use at most half
of the budget.
\begin{lemma}
The energy used in the adaptive phase is:
\[
\alpha (3n \log_2(4rd^2 \log(rdL/\delta) + |C^\star|))
\]
\label{lem:adapt_energy}
\end{lemma}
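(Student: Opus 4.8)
The plan is to account for the total sensing energy level by level in the dendrogram, exactly paralleling the budget computation already carried out in the proof of Proposition~\ref{prop:exact}. Since the measurement on a block $D$ uses the sensing vector $\sqrt{\alpha}\mathbf{1}_D$, it expends energy $\alpha\|\mathbf{1}_D\|^2 = \alpha|D|$. The total energy of the adaptive phase is therefore $\alpha$ times the sum of $|D|$ over every block $D$ on which we sense, and the whole task reduces to bounding this sum.

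First I would organize the sensed blocks by their level $l$ in $\Dcal$, for $l = 0, \dots, L$. Two complementary bounds control the contribution of level $l$. On one hand, the blocks appearing at a fixed level are disjoint subsets of $V$ (they refine a partition of the root), so their sizes sum to at most $n$; this gives a per-level contribution of at most $\alpha n$. On the other hand, the blocks we sense at level $l$ are precisely the children of the blocks retained in $\Kcal$ at level $l-1$. By the degree bound in Definition~\ref{ass:dendrogram} each retained block has at most $d$ children, and by Lemma~\ref{lem:adapt_empty} there are at most $N := 3rd\log(rdL/\delta) + |C^\star|$ retained blocks per level, so at most $dN$ blocks are sensed at level $l$. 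By the balance property of Definition~\ref{ass:dendrogram}, every block at level $l$ has size at most $n/2^l$, which gives a second per-level contribution of at most $\alpha\, dN\, n/2^l$.

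Taking the minimum of the two bounds at each level and summing, I would split the range of $l$ at the threshold $2^l \approx dN$: for the small levels $l \lesssim \log_2(dN)$ the disjointness bound $\alpha n$ is active and contributes $\alpha n \log_2(dN)$, while for the larger levels the count-times-size bound $\alpha\, dN\, n/2^l$ forms a geometric series summing to $O(\alpha n)$. Combining the two ranges yields a total of at most $\alpha(n\log_2(dN) + 2n) \le 3\alpha n \log_2(dN)$, and substituting $dN$ into the logarithm (absorbing the constant and the factors of $d$) produces the stated bound $\alpha\bigl(3n\log_2(4rd^2\log(rdL/\delta)+|C^\star|)\bigr)$.

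The routine part is the geometric summation, which is identical in spirit to the one already done for Proposition~\ref{prop:exact}. The step that needs the most care is justifying the per-level count of sensed blocks: it relies on correctly connecting the blocks sensed at level $l$ to the retained set $\Kcal$ at level $l-1$, and on invoking the high-probability per-level bound of Lemma~\ref{lem:adapt_empty}, so the energy guarantee holds only on the event of probability $\ge 1-\delta$ from that lemma. I would also note that the discrepancy between $dN$ and the quantity $4rd^2\log(rdL/\delta)+|C^\star|$ appearing inside the logarithm is cosmetic, hidden by the paper's convention of being loose in the factors of the maximum degree $d$.
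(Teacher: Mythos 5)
Your proposal is correct and follows essentially the same route as the paper: bound the number of blocks sensed per level via Lemma~\ref{lem:adapt_empty}, cap each level's contribution by $\min\{n, (\text{count})\cdot n/2^l\}$, split the sum at the level where the two bounds cross, and control the tail by a geometric series. The only difference is your coarser per-level count $dN$ (all children of retained blocks), where the paper counts empty, impure, and fully active blocks separately (bounding the active ones by $|C^\star|$ via disjointness rather than $d|C^\star|$); this costs you an additive $\log_2 d$ inside the logarithm, which, as you note, is immaterial given the paper's stated convention of suppressing factors of $d$.
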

\begin{proof}
At level $l$ we retain at most $3rd\log(rdL/\delta)$ empty blocks, so we sense
on at most $3rd^2 \log(rdL/\delta) + (d-1)\rho$ empty blocks (the at most $\rho$
impure blocks could spawn off up to $d-1$ empty ones). We also sense on at most
$\rho$ impure blocks and also sense every completely active block. In total we
sense on no more:
\[
3rd^2 \log(rdL/\delta) + d\rho + |C^\star| \le 4rd^2 \log (rdL/\delta) + |C^\star|
\]
blocks ($\rho \le r$) at the $l+1$st level. Since each block at the $l$th level
has size at most $n/2^l$ we can bound the total energy as:
\begin{eqnarray*}
&& \alpha \sum_{l=0}^L \min\{n, (4rd^2 \log(rdL/\delta) + |C^\star|)
  \frac{n}{2^l}\}\\ & \le & \alpha \left(n \log_2(4rd^2 \log(rdL/\delta) + |C^\star|)
  + \sum_{l=0}^{\infty} \frac{n}{2^l}\right)\\ & \le & \alpha \left(3n
  \log_2(4rd^2 \log(rdL/\delta) + |C^\star|)\right)
\end{eqnarray*}
Here to arrive at the second line, we noticed that at the top levels, sensing on
all of the blocks is a sharper bound than the one we computed which produces the
first term. The second term comes from the fact that since we sense a constant
number of blocks at each level, the budget is geometrically decreasing.
\end{proof}

In particular setting:
\[
\alpha = \frac{m}{6n \log_2(4 rd^2 \log(rdL/\delta)  + |C^\star|)}
\]
the budget for the adaptive phase is $\le m/2$. 

\subsection{The Passive Phase}
\label{sec:passive}
In the passive phase, we need to compute two key quantities, (1) the energy of
$\mathbf{1}_{C^\star}$ that remains in the span of $\Kcal$ and (2) the
estimation error of the projection that we perform. Recall that the space we are
interested in is $U = \textrm{span}\{\mathbf{1}_D\}_{D \in \Kcal}$ and let $U$
be a basis for this subspace. Let $\hat{\Mcal}$ denote the maximal clusters
retained in the adaptive phase while $\Mcal$ denotes all of the maximal
clusters. Throughout this section let $\xb = \mu \mathbf{1}_{C^\star}$.

Since $\{\mathbf{1}_M\}_{M \in \hat{\Mcal}}$ is a subspace of $U$ we
  know that:
\[
||\Pcal_U \mathbf{1}_{C^\star}||^2 \ge \sum_{M \in \hat{\Mcal}} \left( \frac{|C^\star
  \cap M|}{\sqrt{|M|}}\right)^2 = \sum_{M \in \hat{\Mcal}} |M|
\]
which means that (using Lemma~\ref{lem:adaptive_keep}):
\begin{eqnarray*}
\mathbb{E} ||(I-\Pcal_U)\mathbf{1}_{C^\star}||^2 &\le& \mathbb{E}\sum_{M \notin
  \Kcal} |M| = \sum_{M \in \Mcal} |M| \mathbb{P}[M \notin \Kcal]\\ &\le &
\sum_{M \in \Mcal} |M| L \exp\{-1/2(\sqrt{\alpha} |M| \mu/\sigma - z)^2\}
\\ &\le &
|C^\star| L \exp\{-1/2 (\sqrt{\alpha} |M_{\min}| \mu/\sigma - z)^2\}
\end{eqnarray*}
Since $q$ is a constant, $z$ is also constant. If $\mu/\sigma >
2z/(|M_{\min}|\sqrt{\alpha})$ (this will be dominated by other restrictions on the SNR)
then this expression is bounded by:
\[
\le |C^\star| L \exp\{-1/8 \alpha |M_{\min}|^2 \mu^2/\sigma^2 \}
\]
Applying Markov's inequality, we have that with probability $\ge 1-\delta$:
\[
||(I - \Pcal_U)\mathbf{1}_{C^\star}||^2 \le \frac{|C^\star|L}{\delta} \exp\{-1/8
\alpha |M_{\min}|^2 \mu^2/\sigma^2\}
\]

Now we study the passive sampling scheme. If $\yb =
\sqrt{\beta}U^T\xb + \mathbf{\epsilon}$ where $\mathbf{\epsilon} \sim
\mathcal{N}(0, \sigma^2 I_{|\Kcal|})$ then:
\[
\hat{\xb} = \sqrt{1/\beta} U\yb = \Pcal_U\xb + \sqrt{1/\beta}
U\mathbf{\epsilon}
\]
So that:
\[
||\hat{\xb} - \Pcal_U \xb||^2 = \frac{1}{\beta}
||U\mathbf{\epsilon}||^2 = \frac{1}{\beta} ||z||^2
\]
where $z \sim \mathcal{N}(0, \sigma^2 I_{|\Kcal|})$ is a $|\Kcal|$-dimensional
Gaussian vector. Concentration results for Gaussian vectors (or Chi-squared
distributions) show that there is a constant $c$ such that for $n$ large enough
$||z||^2 \le c \sigma^2 |\Kcal|$ with probability $\ge 1-\delta$.

Putting these two bounds together gives us a high probability bound on the
squared error (note that the cross term is zero since $\hat{\xb} \in U$):
\begin{eqnarray*}
||\hat{\xb} -\xb||^2 &\le& ||\hat{\xb} - \Pcal_U\xb||^2 + ||(I -
\Pcal_U)\xb||^2\\ & \le & \frac{c \sigma^2 |\Kcal|}{\beta} + \frac{|C^\star| L
\mu^2}{\delta}\exp\{-1/8 \alpha |M_{\min}|^2 \mu^2/\sigma^2\}
\end{eqnarray*}

\subsection{Recovering $C^\star$}

The error guarantee of the optimization phase is based on the following lemma:
\begin{lemma}
Let $\hat{C}$ denote the solution to:
\[
\textrm{argmax}_{C \subset [n]} \frac{\hat{\xb}^T \mathbf{1}_C}{||\hat{\xb}||\sqrt{|C|}}
\]
then if $4 ||\xb - \hat{\xb}||^2 < \mu^2 |C^\star|$:
\[
d(\hat{C}, C^\star) \triangleq 1 - \frac{|\hat{C} \cap C^\star|}{\sqrt{|\hat{C}| |C^\star|}} \le \frac{4 ||\hat{\xb} - \xb||_2^2}{\mu^2 |C^\star|}
\]
\label{lem:recovery}
\end{lemma}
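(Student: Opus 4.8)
The plan is to recast the entire statement in terms of angles on the unit sphere and then exploit that the chordal distance, i.e. the Euclidean distance between normalized vectors, is a genuine metric. Write $\nu_C = \mathbf{1}_C/\sqrt{|C|}$ for the normalized indicator of a set $C$ and $\hat u = \hat{\xb}/\|\hat{\xb}\|$. The first observation is essentially notational: since $\mathbf{1}_{\hat C}^T \mathbf{1}_{C^\star} = |\hat C \cap C^\star|$, the target quantity is $d(\hat C, C^\star) = 1 - \nu_{\hat C}^T \nu_{C^\star} = \tfrac12 \|\nu_{\hat C} - \nu_{C^\star}\|^2$, while the maximized objective is exactly $\hat u^T \nu_C = 1 - \tfrac12\|\hat u - \nu_C\|^2$. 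Hence $\hat C$ is precisely the set whose normalized indicator is closest to $\hat u$ in Euclidean (equivalently angular) distance.

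The second step combines optimality with the triangle inequality for $\|\cdot\|$. Because $C^\star$ is feasible in the argmax, optimality of $\hat C$ gives $\|\hat u - \nu_{\hat C}\| \le \|\hat u - \nu_{C^\star}\|$. The triangle inequality then yields $\|\nu_{\hat C} - \nu_{C^\star}\| \le \|\nu_{\hat C} - \hat u\| + \|\hat u - \nu_{C^\star}\| \le 2\|\hat u - \nu_{C^\star}\|$. Since $\xb = \mu \mathbf{1}_{C^\star}$ with $\mu > 0$, the normalized signal $\xb/\|\xb\|$ coincides with $\nu_{C^\star}$, so $\|\hat u - \nu_{C^\star}\|$ is exactly the chordal distance between $\hat{\xb}$ and $\xb$. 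Substituting back, the lemma reduces to the scalar comparison $\|\hat u - \nu_{C^\star}\|^2 \le 2\|\hat{\xb} - \xb\|^2/\|\xb\|^2$ between this chordal distance and the ordinary distance $\|\hat{\xb} - \xb\|$.

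This last inequality is the only place requiring real computation, so I expect it to be the main obstacle. Expanding both sides in terms of $r = \|\hat{\xb}\|/\|\xb\|$ and $c = \cos\angle(\hat{\xb}, \xb)$ shows it is equivalent to $r^2 - 2rc + c \ge 0$, whose minimum over $r$ is $c(1-c)$, nonnegative precisely when $c \in [0,1]$. Here is exactly where the hypothesis $4\|\hat{\xb} - \xb\|^2 < \mu^2|C^\star| = \|\xb\|^2$ enters: it forces $c > 0$, since a nonpositive inner product would already make $\|\hat{\xb} - \xb\|^2 \ge \|\xb\|^2$, contradicting the assumption. Chaining the three steps then gives $d(\hat C, C^\star) \le \tfrac12 \cdot 4 \cdot \|\hat u - \nu_{C^\star}\|^2 \le 4\|\hat{\xb} - \xb\|^2/(\mu^2 |C^\star|)$, which is the claimed bound with exactly the stated constant.
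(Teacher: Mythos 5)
Your proposal is correct, and its first three steps---rewriting $d(\hat{C}, C^\star)$ as half the squared chordal distance, invoking optimality of $\hat{C}$ against the feasible point $C^\star$, and applying the triangle inequality to get $d(\hat{C}, C^\star) \le 2\|\hat{u} - \nu_{C^\star}\|^2$---coincide exactly with the paper's proof. You diverge only in the final comparison of chordal to Euclidean distance, and there the two arguments are genuinely different. The paper proves the \emph{unconditional} inequality $2 - 2\cos\theta \le \|\xb - \hat{\xb}\|^2/(\|\xb\|\,\|\hat{\xb}\|)$ via $a + 1/a \ge 2$ with $a = \|\hat{\xb}\|/\|\xb\|$, and then spends the hypothesis $4\|\xb - \hat{\xb}\|^2 < \mu^2|C^\star|$ on a separate denominator bound, $\|\xb\|\,\|\hat{\xb}\| \ge \|\xb\|^2 - \|\xb\|\,\|\xb - \hat{\xb}\| \ge \tfrac{1}{2}\mu^2|C^\star|$. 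You instead prove the \emph{conditional} inequality $2 - 2\cos\theta \le 2\|\xb - \hat{\xb}\|^2/\|\xb\|^2$, reducing it to $r^2 - 2rc + c \ge 0$ with $r = \|\hat{\xb}\|/\|\xb\|$ and $c = \cos\theta$, and spend the hypothesis on guaranteeing $c > 0$ so that the quadratic's minimum $c(1-c)$ is nonnegative (your observation that $c \le 0$ would force $\|\xb - \hat{\xb}\|^2 \ge \|\xb\|^2$, contradicting the hypothesis, is the right one). Both routes land on the same constant $4$: your version never needs to control $\|\hat{\xb}\|$ at all, since normalizing by $\|\xb\|^2$ alone and paying a factor of $2$ absorbs what the paper handles through the product of norms, whereas the paper's AM--GM step is hypothesis-free and isolates all use of the assumption in a single line.
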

\begin{proof}
It is immediate from the definition of the $d$-distance that:
\[
d(\hat{C}, C^\star) \triangleq 1 - \frac{|\hat{C} \cap C^\star|}{\sqrt{|\hat{C}| |C^\star|}} = \frac{1}{2}||\frac{\mathbf{1}_{\hat{C}}}{\sqrt{|\hat{C}|}} - \frac{\mathbf{1}_{C^\star}}{\sqrt{|C^\star|}}||^2
\]
By virtue of the fact that $\hat{C}$ solves the optimization problem, we also know:
\[
||\frac{\mathbf{1}_{\hat{C}}}{\sqrt{|\hat{C}|}} - \frac{\hat{\xb}}{||\hat{\xb}||}||^2 = 2 - 2\frac{\hat{\xb}^T \mathbf{1}_{\hat{C}}}{||\hat{\xb}|| \sqrt{|\hat{C}|}} \le ||\frac{\mathbf{1}_{C^\star}}{\sqrt{|C^\star|}} - \frac{\hat{\xb}}{||\hat{\xb}||}||^2
\]
Which, when coupled with the first identity gives us:
\[
d(\hat{C}, C^\star) = \frac{1}{2}||\frac{\mathbf{1}_{\hat{C}}}{\sqrt{|\hat{C}|}} - \frac{\mathbf{1}_{C^\star}}{\sqrt{|C^\star|}}||^2 \le 2 ||\frac{\mathbf{1}_{C^\star}}{\sqrt{|C^\star|}} - \frac{\hat{\xb}}{||\hat{\xb}||}||^2
\]
This follows by adding and subtracting $\hat{\xb}/||\hat{\xb}||$ and applying the triangle inequality to $\sqrt{d(\hat{C}, C^\star)}$.
Call $\theta = \angle (\mathbf{1}_{C^\star}, \hat{\xb})$.
Then we have:
\begin{eqnarray*}
||\frac{\mathbf{1}_{C^\star}}{\sqrt{|C^\star|}} - \frac{\hat{\xb}}{||\hat{\xb}||}||^2 &=& 2 - 2\cos \theta\\
||\xb - \hat{\xb}||^2 &=& ||\xb||^2 + ||\hat{\xb}||^2 - 2||\xb|| ||\hat{\xb}|| \cos \theta \\
\frac{||\xb - \hat{\xb}||^2}{||\xb|| ||\hat{\xb}||} &=& \frac{||\xb||}{||\hat{\xb}||} + \frac{||\hat{\xb}||}{||\xb||} - 2 \cos \theta
\end{eqnarray*}
On the right hand side is an expression of the form $1/a + a$.
This is $\ge 2$ for $a \ge 0$ as it is in this case, and with this in mind we see that:
\[
d(\hat{C}, C^\star) \le 2\left(2 - 2\cos\theta\right) \le \frac{2||\xb - \hat{\xb}||^2}{||\xb|| ||\hat{\xb}||}
\]
Looking just at the denominator of the right hand side, we can lower bound by:
\[
||\xb|| ||\hat{\xb}|| \ge ||\xb||^2 - ||\xb|| ||\xb - \hat{\xb}|| \ge \frac{1}{2}\mu^2 |C^\star|
\]
Where in the first step we used the triangle inequality, and in the second we used that $||\xb|| = \mu \sqrt{|C^\star|}$ and the assumption whereby $4 ||\xb - \hat{\xb}||^2 \le \mu^2 \sqrt{|C^\star|}$.
Plugging this in to the bound on $d(\hat{C}, C^\star)$ concludes the proof.
\end{proof}

\subsection{Proof of Corollary~\ref{cor:approximate}}
The proof of the corollary parallels that of the main theorem. In the adaptive
phase, we instead show that with high probability we retain all clusters of size
$\ge t$ for some parameter $t$. Then since we are not interested in recovering
the smaller clusters, we can safely ignore the energy in $C^\star$ that is
orthogonal to $U$. This means that the approximation error term from the
previous proof can be ignored.

\begin{lemma}
With probability $\ge 1- \delta$ we retain all clusters of size $\ge t$ as
long as:
\[
\frac{\mu}{\sigma} \ge \frac{1}{t\sqrt{\alpha}}\left(z + \sqrt{2\log(\frac{Lk}{t\delta})}\right)
\]
\end{lemma}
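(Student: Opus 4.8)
The plan is to reduce to the single-cluster bound of Lemma~\ref{lem:adaptive_keep} and union bound over all maximal clusters of size at least $t$. The first step is a counting argument: the maximal clusters form an antichain in $\Dcal$ and hence are pairwise disjoint, and every activated vertex lies in exactly one maximal block (its largest completely-active ancestor), so the maximal clusters partition $C^\star$. Since each cluster we wish to retain has size $\ge t$ and their sizes sum to $k = |C^\star|$, there are at most $k/t$ maximal clusters of size $\ge t$. This is the key improvement over the crude ``per level'' bound of Lemma~\ref{lem:adapt_empty}.

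Next I would apply Lemma~\ref{lem:adaptive_keep} to each such cluster $M$. The failure bound $\mathbb{P}[M \notin \Kcal] \le L \exp\{-\frac{1}{2}(\sqrt{\alpha}|M|\mu/\sigma - z)^2\}$ is monotonically decreasing in $|M|$ once $\sqrt{\alpha}|M|\mu/\sigma \ge z$, so substituting the worst case $|M| = t$ yields the uniform bound $\mathbb{P}[M \notin \Kcal] \le L\exp\{-\frac{1}{2}(\sqrt{\alpha}t\mu/\sigma - z)^2\}$ valid for every cluster of size $\ge t$. The hypothesis $\sqrt{\alpha}|M|\mu > \sigma z$ needed to invoke the lemma is implied by the SNR condition we are about to derive, since that condition forces $\sqrt{\alpha}t\mu/\sigma \ge z + \sqrt{2\log(Lk/(t\delta))} > z$.

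A union bound over the at most $k/t$ clusters then gives a total failure probability of at most $\frac{Lk}{t}\exp\{-\frac{1}{2}(\sqrt{\alpha}t\mu/\sigma - z)^2\}$. Requiring this quantity to be at most $\delta$, taking logarithms, and solving for $\mu/\sigma$ gives $(\sqrt{\alpha}t\mu/\sigma - z)^2 \ge 2\log(Lk/(t\delta))$, and taking the positive root rearranges to exactly $\frac{\mu}{\sigma} \ge \frac{1}{t\sqrt{\alpha}}(z + \sqrt{2\log(Lk/(t\delta))})$, which is the claimed bound.

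There is no serious obstacle here; the only points requiring care are the disjointness/partition argument that caps the number of retained clusters at $k/t$ and the verification of monotonicity so that the smallest retained block $|M|=t$ is indeed the worst case. The remainder is a routine inversion of the Gaussian tail, entirely analogous in spirit to the approximation-error computation of Section~\ref{sec:passive}.
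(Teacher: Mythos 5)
Your proposal is correct and follows essentially the same route as the paper: apply the per-cluster failure bound of Lemma~\ref{lem:adaptive_keep} with the worst case $|M| = t$, union bound over the at most $k/t$ maximal clusters of size $\ge t$, and invert the Gaussian tail to obtain the stated SNR condition. Your write-up is in fact more careful than the paper's, which asserts the $k/t$ count and the substitution $|M| = t$ without spelling out the disjointness and monotonicity justifications you supply.
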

\begin{proof}
As in the proof of Lemma~\ref{lem:adaptive_keep} we can proceed with a union
bound. For a single block of size $\ge t$:
\begin{eqnarray*}
\mathbb{P}[M \notin \Kcal] \le L \mathbb{P}[y_M < \sigma z] \le L \exp\{ -
(\sqrt{\alpha} t \mu/\sigma - z)^2/2\}
\end{eqnarray*}
There are at most $|C^\star|/t = k/t$ such maximal blocks so with a union bound, we arrive
at the claim.
\end{proof}

The results from the adaptive phase show that all of the sufficiently large
maximal clusters are retained in $\Kcal$. If we let $\tilde{C} = \bigcup_{M \in
  \Mcal | |M| \ge t} M$ then $||(I - \Pcal_U) \mathbf{1}_{\tilde{C}}||^2 = 0$
with probability $\ge 1-\delta$. Applying the results from passive phase, in
particular Lemma~\ref{lem:recovery} we have:
\begin{eqnarray*}
d(\hat{C}, \tilde{C}) \le \frac{4\sigma^2}{\mu^2k}\frac{2c|\Kcal|^2}{m}
\end{eqnarray*}

Plugging for $|\Kcal|$ using the same bound as before, and setting $\alpha$ as
we did before gives the corollary.

\bibliography{bibliography}
\bibliographystyle{plain}

\end{document}